\newtheorem{myDef}{Definition}
\newtheorem{myPro}{Proposition}
\newtheorem{appendixPro}{Proposition}
\title{FourierGNN: Rethinking Multivariate Time Series Forecasting from a Pure Graph Perspective}
\author{
Kun Yi$^1$, Qi Zhang$^2$, Wei Fan$^3$, Hui He$^1$, Liang Hu$^2$, Pengyang Wang$^4$\\
\textbf{Ning An$^5$, Longbing Cao$^6$, Zhendong Niu$^1$\thanks{Corresponding author}} \and
$^1$Beijing Institute of Technology,
$^2$Tongji University,
$^3$University of Oxford\\
$^4$University of Macau,
$^5$HeFei University of Technology,
$^6$Macquarie University\\
\{yikun, hehui617, zniu\}@bit.edu.cn, zhangqi\_cs@tongji.edu.cn, 
weifan.oxford@gmail.com\\
railmilk@gmail.com,
pywang@um.edu.mo,
ning.g.an@acm.org, 
longbing.cao@mq.edu.au
}
\begin{document}

\maketitle

\begin{abstract}
Multivariate time series (MTS) forecasting has shown great importance in numerous industries. Current state-of-the-art graph neural network (GNN)-based forecasting methods usually require both graph networks (e.g., GCN) and temporal networks (e.g., LSTM) to capture inter-series (spatial) dynamics and intra-series (temporal) dependencies, respectively. However, the uncertain compatibility of the two networks puts an extra burden on handcrafted model designs. 
 Moreover, the separate spatial and temporal modeling naturally violates the unified spatiotemporal inter-dependencies in real world, which largely hinders the forecasting performance. To overcome these problems, we explore an interesting direction of \textit{directly applying graph networks} and rethink MTS forecasting from a \textit{pure} graph perspective. We first define a novel data structure, \textit{hypervariate graph}, which regards each series value (regardless of variates or timestamps) as a graph node,  and represents sliding windows as space-time fully-connected graphs. This perspective considers spatiotemporal dynamics unitedly and reformulates classic MTS forecasting into the predictions on hypervariate graphs.
 Then, we propose a novel architecture \textit{Fourier Graph Neural Network} (FourierGNN) by stacking our proposed Fourier Graph Operator (FGO) to perform matrix multiplications in \textit{Fourier space}. FourierGNN accommodates adequate expressiveness and achieves much lower complexity, which can effectively and efficiently accomplish {the forecasting}. 
 Besides, our theoretical analysis reveals FGO's equivalence to graph convolutions in the time domain, which further verifies the validity of FourierGNN. Extensive experiments on seven datasets have demonstrated our superior performance with higher efficiency and fewer parameters compared with state-of-the-art methods. Code is available at this repository: \url{https://github.com/aikunyi/FourierGNN}. 
\end{abstract}

\section{Introduction}\label{sec:intro}
Multivariate time series (MTS) forecasting plays an important role in numerous real-world scenarios, such as traffic flow prediction in transportation systems \cite{Yu2018,Bai2020nips,He_2023}, temperature estimation in weather forecasting \cite{Zheng2015,autoformer21,WWWZF23}, and electricity consumption planning in the energy market \cite{HeZBYN22,depts_2022}, etc.
In MTS forecasting, the core challenge is to model intra-series (temporal) dependencies and simultaneously capture inter-series (spatial) correlations. 
Existing literature has primarily focused on the temporal modeling and proposed several forecasting architectures, including Recurrent Neural Network (RNN)-based methods (e.g., DeepAR~\cite{salinas2020deepar}), Convolution Neural Network (CNN)-based methods (e.g., Temporal Convolution Network \cite{bai2018}) and more recent Transformer-based methods (e.g., Informer \cite{Zhou2021} and Autoformer \cite{autoformer21}). 
In addition, another branch of MTS forecasting methods has been developed to not only model temporal dependencies but also places emphasis on spatial correlations. The most representative methods are the emerging Graph Neural Network (GNN)-based approaches \cite{Cao2020,Bai2020nips,tampsgcnets2022} that have achieved state-of-the-art performance in the MTS forecasting task.

Previous GNN-based forecasting methods (e.g., STGCN~\cite{YuYZ18} and TAMP-S2GCNets \cite{tampsgcnets2022}) heavily rely on a pre-defined graph structure to specify the spatial correlations, which as a matter of fact cannot capture the \textit{spatial dynamics}, i.e., the time-evolving spatial correlation patterns.
Later advanced approaches (e.g., StemGNN \cite{Cao2020}, MTGNN \cite{wu2020connecting}, AGCRN~\cite{Bai2020nips}) can automatically learn inter-series correlations and accordingly model spatial dynamics without pre-defined priors, but almost all of them are designed by stacking graph networks (e.g., GCN and GAT) to capture \textit{spatial dynamics} and temporal networks (e.g., LSTM and GRU) to capture \textit{temporal dependencies}. 
However, the uncertain compatibility of the graph networks and the temporal networks puts extra burden on handcrafted model designs, which hinders the forecasting performance. Moreover, the respective modeling for the two networks \textit{separately} learn spatial/temporal correlations, which naturally violate the real-world unified spatiotemporal inter-dependencies.
In this paper, we explore an opposite direction of \textit{directly applying graph networks for forecasting} and investigate an interesting question: \textbf{\textit{can pure graph networks capture spatial dynamics and temporal dependencies even without temporal networks?}}

To answer this question, we rethink the MTS forecasting task from a pure graph perspective. 
We start with building a new data structure, \textit{hypervariate graph}, to represent time series with a united view of spatial/temporal dynamics. The core idea of the {hypervariate graph} is to construct a space-time fully-connected structure. Specifically, given a multivariate time series window (say input window)  %$\mathbf{x}^{(1..N)}_{t-L:t} \in \mathbb{R}^{N \times L} $ 
%$ \{ \bm{x}_{t-L:t}^{(n)} \}_{n=1}^{N} \in \mathbb{R}^{N \times L}$ 
$X_t \in \mathbb{R}^{N \times T}$ at timestamp $t$, 
where $N$ is the number of series (variates) and $T$ is the length of input window, we construct a corresponding \textit{hypervariate graph} structure represented as $\mathcal{G}_t^{T} = (X_t^T, A_t^T)$, which is initialized as a fully-connected graph of $NT$ nodes with adjacency matrix $A_t^T \in \mathbb{R}^{NT \times NT}$  and node features $X_t^T \in \mathbb{R}^{NT \times 1}$ by regarding \textit{each value} $x_t^{(n)} \in \mathbb{R}^1$ (variate $n$ at step $t$) of input window as a distinct \textit{node} of a hypervariate graph.
Such a special structure design formulates both intra- and inter-series correlations of multivariate series as pure \textit{node-node dependencies} in the hypervariate graph. 
Different from classic formulations that make spatial-correlated graphs and learn dynamics in a two-stage (spatial and temporal) process \cite{wu2020connecting}, our perspective views spatiotemporal correlations as a whole. It abandons the uncertain compatibility of spatial/temporal modeling, constructs adaptive space-time inter-dependencies, and
brings up higher-resolution fusion across multiple variates and timestamps in MTS forecasting.

Then, with such a graph structure, the multivariate forecasting can be originally formulated into {the predictions on the hypervariate graph}. However, the node number of the hypervariate graph increase with the number of series ($N$) and the window length ($T$), leading to a graph of large order and size. This could make classic graph networks (e.g., GCN~\cite{KipfW17}, GAT~\cite{gat_2017}) computationally expensive (usually with quadratic complexity) and suffer from optimization difficulty in obtaining accurate node representations~\cite{Smith-MilesL12}. 
%whose higher complexity brings up extra potential computational burden and optimization difficulties when adopting traditional graph neural networks. 
To this end, we propose a novel architecture, \textit{Fourier Graph Neural Network} (FourierGNN), for MTS forecasting from a pure graph perspective.
Specifically, FourierGNN is built upon our proposed \emph{Fourier Graph Operator} (FGO), which as a replacement of classic graph operation units (e.g., convolutions), performs \textit{matrix multiplications} in \textit{Fourier space} of graphs. 
By stacking FGO layers in Fourier space, FourierGNN can accommodate adequate learning expressiveness and in the mean time achieve much lower complexity (Log-linear complexity), which thus can effectively and efficiently accomplish MTS forecasting. 
Besides, we present theoretical analysis to demonstrate that the FGO is equivalent to graph convolutions in the time domain, which further explains the validity of FourierGNN.

Finally, we perform extensive experiments on seven real-world benchmarks. Experimental results demonstrate that FourierGNN achieves an average of more than 10\% improvement in accuracy compared with state-of-the-art methods. In addition, FourierGNN achieves higher forecasting efficiency, which has about 14.6\% less costs in training time and 20\% less parameter volumes, compared with most lightweight GNN-based forecasting methods.
\vspace{-2mm}

\section{Related Work}
\paragraph{Graph Neural Networks for Multivariate Time Series Forecasting}
Multivariate time series (MTS) have embraced GNN due to their best capability of modeling structural dependencies between variates~\cite{zonghanwu2019,Bai2020nips,wu2020connecting,YuYZ18,tampsgcnets2022,LiYS018,lian2022}. Most of these models, such as STGCN \cite{YuYZ18}, DCRNN \cite{LiYS018}, and TAMP-S2GCNets \cite{tampsgcnets2022}, require a pre-defined graph structure which is usually unknown in most cases. For this limitation, some studies enable to automatically learn the graphs by the inter-series correlations, e.g., by node similarity \cite{MateosSMR19,Bai2020nips,zonghanwu2019} or self-attention mechanism \cite{Cao2020}.
However, these methods always adopt a \textit{graph network} for spatial correlations and a \textit{temporal network} for temporal dependencies separately~\cite{MateosSMR19,zonghanwu2019,Cao2020,Bai2020nips}. For example, AGCRN~\cite{Bai2020nips} use a GCN~\cite{KipfW17} and a GRU~\cite{gru_2014}, GraphWaveNet~\cite{zonghanwu2019} use a GCN and a TCN~\cite{bai2018}, etc.
In this paper, we propose an unified spatiotemporal formulation with pure graph networks for MTS forecasting.
\vspace{-2mm}
%In recent years, some GNN-based works \cite{Kipf2018,Deng2021} account for the dynamic dependencies through introducing network design such as the time-varying attention \cite{Deng2021}. In comparison, our proposed model captures the dynamic dependencies leveraging the high-resolution correlation in the hypervariate graph without introducing specific networks. 
\paragraph{Multivariate Time Series Forecasting with Fourier Transform} 
Recently, many MTS forecasting models have integrated the Fourier theory into deep neural networks~\cite{kunyi_2023,FiLM_2022}. For instance, SFM \cite{ZhangAQ17} decomposes the hidden state of LSTM into multiple frequencies by Discrete Fourier Transform (DFT). mWDN \cite{jyWang2018} decomposes the time series into multilevel sub-series by discrete wavelet decomposition (DWT) and feeds them to LSTM network. 
ATFN \cite{YangYHM22} proposes a Discrete Fourier Transform-based block to capture dynamic and complicated periodic patterns of time series data. 
FEDformer \cite{fedformer22} proposes Discrete Fourier Transform-based attention mechanism with low-rank approximation in frequency. 
While these models only capture temporal dependencies with Fourier Transform,  
StemGNN \cite{Cao2020} takes the advantages of both spatial correlations and temporal dependencies in the spectral domain by utilizing Graph Fourier Transform (GFT) to perform graph convolutions and Discrete Fourier Transform (DFT) to calculate the series relationships. %, but, it captures the temporal and spatial dependencies separately. Unlike these efforts, our model is able to jointly encode spatiotemporal dependencies in Fourier space.
\vspace{-2mm}
\section{Problem Definition}
%Let us denote an entire multivariate time series (MTS) dataset as $\mathbb{X} \in \mathbb{R}^{N\times L}$ where $N$ is the number of variables and $L$ is the number of timestamps of the whole dataset. 
%Under the rolling setting, the time-series inputs are defined as averagely sampled lookback windows of length $T$, i.e., $\{X|X\subset\mathbb{X},X\in\mathbb{R}^{N\times T}\}$. 
Given the multivariate time series input, i.e., the lookback window $X_t=[\bm{x}_{t-T+1},...,\bm{x}_t]\in\mathbb{R}^{N\times T}$ at timestamps $t$ with the number of series (variates) $N$ and the lookback window size $T$, where $\bm{x}_t \in \mathbb{R}^{N}$ denotes the multivariate values of $N$ series at timestamp $t$. Then, the \textit{multivariate time series forecasting} task is to predict the next $\tau$ timestamps $Y_t=[{\bm{x}}_{t+1}, ..., {\bm{x}}_{t+\tau}]\in \mathbb{R}^{N \times \tau}$ based on the historical $T$ observations $X_t=[\bm{x}_{t-T+1},...,\bm{x}_t]$. The forecasting process can be given by:
% \begin{equation}\label{eq:problem_define}
%     \hat{\bm{x}}_{t+1},...,\hat{\bm{x}}_{t+\tau} = {F_{\theta}}(\bm{x}_{t-T+1},...,\bm{x}_t))
% \end{equation}
\begin{equation}\label{eq:problem_define}
   \hat{Y}_t:= {F_{\theta}}(X_t) = {F_{\theta}}([\bm{x}_{t-T+1},...,\bm{x}_t])
\end{equation}
where $\hat{Y}_t$ are the predictions corresponding to the ground truth ${Y}_t$. The forecasting function is denoted as $F_\theta$ parameterized by $\theta$.
In practice, many MTS forecasting models usually leverage a \textit{graph network} (assume parameterized by $\theta_g$) to learn the spatial dynamics and a \textit{temporal network} (assume parameterized by $\theta_t$) to learn the temporal dependencies, respectively \cite{zonghanwu2019, Cao2020,Bai2020nips,wu2020connecting,tampsgcnets2022}. Thus, the original definition of Equation (\ref{eq:problem_define}) can be rewritten to:
% \begin{equation}\label{eq:problem_define_gt}
%     \hat{\bm{x}}_{t+1},...,\hat{\bm{x}}_{t+\tau} = {F_{{\theta_g},{\theta_t}}}(\bm{x}_{t-T+1},...,\bm{x}_t)
% \end{equation}
\begin{equation}\label{eq:problem_define_gt}
    \hat{Y}_t:= {F_{\theta_g,\theta_t}}(X_t) = F_{\theta_g,\theta_t}([\bm{x}_{t-T+1},...,\bm{x}_t])
\end{equation}
where original parameters $\theta$ are exposed to the parameters of the graph network ${\theta_g}$ and the temporal network $ {\theta_t}$ to make prediction based on the learned spatial-temporal dependencies.
%{
%\color{red}
%In the following section, we detnote the lookback window at timestamp $t$ as $X_t$, and the horizon window as $Y_t$ for brevity. 
%}

% $F_{\theta_g}$ with parameters $\theta_g$ for spatial correlations and a temporal network $F_{\theta_t}$ with parameters $\theta_t$ for temporal dependencies. Subsequently, a forecasting function $F_{\theta_f}$ with parameters $\theta_f$ is applied to make predictions based on the learned spatial-temporal dependencies. As a result, the forecasting problem is formulated as follows:
% \begin{equation}
%     \hat{X}_{t+1},...,\hat{X}_{t+\tau} = {F_{\theta_f}}((F_{\theta_g},F_{\theta_s})(X_{t-T},...,X_t))
% \end{equation}
%where $\operatorname{F}$ is the forecasting function and $\Theta$ is the learnable parameters of the forecasting function.

\section{Methodology}
In this section, we elaborate on our proposed framework: First, we start with our pure graph formulation with a novel hypervariate graph structure for MTS forecasting
in Section \ref{sec:method_pure_graph_form}. Then, we illustrate the proposed neural architecture, Fourier Graph Neural Network (FourierGNN), for this formulation in Section \ref{sec:method_fouriergnn}. Besides, we theoretically analyze FourierGNN to demonstrate its architecture validity, and also conduct complexity analysis to show its efficiency. Finally, we introduce certain inductive bias to instantiate FourierGNN for MTS forecasting in Section \ref{sec:ovarc}.

\subsection{The Pure Graph Formulation}\label{sec:method_pure_graph_form}
To overcome the uncertain compatibility of the graph network and the temporal network as aforementioned in Section \ref{sec:intro}, and learn the united spatiotemporal dynamics, we propose a \textit{pure} graph formulation that refines Equation (\ref{eq:problem_define_gt}) by a novel data structure, \textit{hypervariate graph}, for time series.
% In this paper, our goal is to learn an united view of spatiotemporal dynamics simultaneously and model the interdependencies between variables and their evolution over time, to this end, we build a hypervariate graph $\mathcal{G}=(X,A)$ attributed to each $X$ and further formulate the problem of MTS forecasting on the hypervariate graph. 
 \begin{myDef}[\textbf{Hypervariate Graph}]
\label{hypervariate_graph}
    Given a multivariate time series window as input $X_t \in \mathbb{R}^{N \times T}$ of $N$ variates at timestamp $t$, we construct a hypervariate graph of $NT$ nodes, $\mathcal{G}_t=({X}^{\mathcal{G}}_t, {A}^{\mathcal{G}}_t)$, by regarding each element of $X_t$ 
    %(regardless of variables or timestamps)
    as one node of $\mathcal{G}_t$ such that ${X}^{\mathcal{G}}_t \in \mathbb{R}^{NT \times 1}$ stands for the node feature and ${A}^{\mathcal{G}}_t \in \mathbb{R}^{NT \times NT}$ is the adjacency matrix initialized to make $\mathcal{G}_t$ as a fully-connected graph.
    % variable at each timestamp is represented by a node in the graph, with $X\in \mathbb{R}^{NT}$ representing the nodes and $A \in \mathbb{R}^{(NT) \times (NT)}$ being the adjacency matrix. We refer to the graph $\mathcal{G}$ as a hypervariate graph.
\end{myDef}
 %The hypervariate graph $\mathcal{G}_t$ contains $NT$ nodes that represent values of each variable at each timestamp in $X_t$. 
 \begin{wrapfigure}{r}{0cm}
\centering
\vspace{-0mm}
 \includegraphics[width=0.38\textwidth]{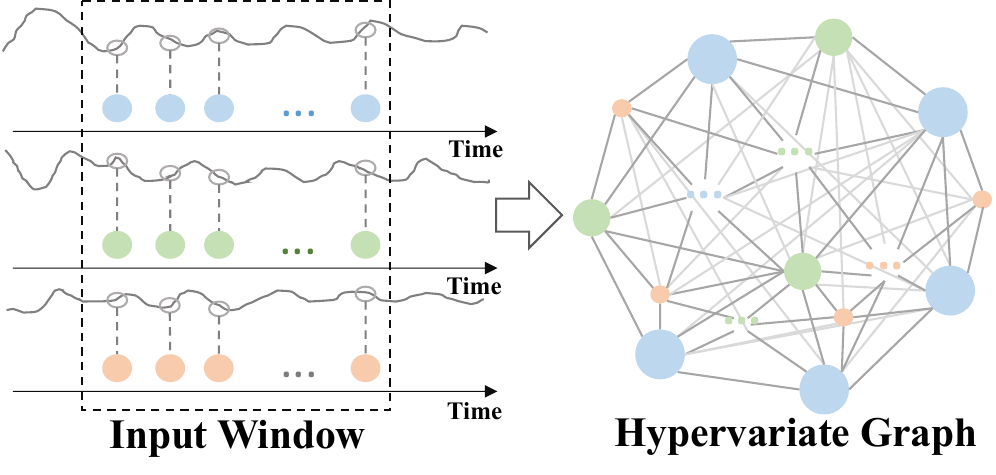}
 \vspace{-1mm}
 \caption{Illustration of a hypervariate graph with three time series. Each value in the input window is considered as a node of the graph.}
\label{fig:hypervariate_graph}
\vspace{-3mm}
\end{wrapfigure}
Since the prior graph structure is usually unknown in most multivariate time series scenarios~\cite{Cao2020,Bai2020nips,wu2020connecting}, and the elements of $X_t$ are spatially or temporally correlated with each other because of time lag effect~\cite{wei1989}, 
%Since the elements of $X_t$ are spatially or temporally correlated with each other because of time lag effect~\cite{wei1989}, and this graph structure is usually unknown in most MTS scenarios,
%~\cite{Cao2020,Bai2020nips,wu2020connecting}, 
we assume all nodes in the hypervariate graph $\mathcal{G}_t$ are fully-connected.
The hypervariate graph $\mathcal{G}_t$ contains $NT$ nodes representing the values of each variate at each timestamp in $X_t$, which can learn a high-resolution representation across timestamps and variates (more explanations of the hypervariate graph can be seen in Appendix \ref{analysis_hypervariate_graph}). We present an example hypervariate graph of three time series in Figure \ref{fig:hypervariate_graph}. 
 Thus, with such a data structure, 
 we can reformulate the \textit{multivariate time series forecasting} task into the predictions on the hypervariate graphs, and accordingly rewrite Equation (\ref{eq:problem_define_gt}) into:
 %on the hypervariate graph $\mathcal{G}_t=({X}^{\mathcal{G}}_t, {A}^{\mathcal{G}}_t)$ and the problem formulation is modified as follows:
\begin{equation}
    \hat{Y}_t :=F_{\theta_\mathcal{G}}({X^\mathcal{G}_t},{A_t^\mathcal{G}})
\end{equation}
where $\theta_\mathcal{G}$ stands for the network parameters for hypervariate graphs.
With such a formulation, we can view the spatial dynamics and the temporal dependencies from an united perspective, which benefits modeling the real-world spatiotemporal inter-dependencies.

\subsection{FourierGNN}\label{sec:method_fouriergnn}

Though the pure graph formulation can enhance spatiotemporal modeling, 
the order and size of hypervariate graphs increase with the number of variates $N$ and the size of window $T$, which makes classic graph networks (e.g., GCN~\cite{KipfW17} and GAT~\cite{gat_2017}) computationally expensive (usually quadratic complexity) and suffer from optimization difficulty in obtaining accurate hidden node representations~\cite{Smith-MilesL12}. 
In this regard, we propose an efficient and effective method, FourierGNN, for the pure graph formulation. The main architecture of FourierGNN is built upon our proposed Fourier Graph Operator (FGO), a learnable network layer in Fourier space, which is detailed as follows.

%\begin{myDef}[\textbf{Fourier Graph Operator}]
%\label{def:fgso}
%Given a graph $G=({X}, A)$ with node features ${X}\in\mathbb{R}^{n\times d}$ and the adjacency matrix $A \in\mathbb{R}^{n\times n}$ where $n$ is %the node number and $d$ is the feature number, we can use a learnable weight matrix $W\in\mathbb{R}^{d\times d}$ to acquire a matrix-valued kernel %$\kappa: [n]\times [n] \xrightarrow{} \mathbb{R}^{d\times d}$ with $\kappa[i,j]=A_{ij}\circ {W}$  where $[n]=\{1,2,\cdots,n\}$.
%Then, we define $\mathcal{S}(X, A):=\mathcal{F}(\kappa)\in\mathbb{C}^{d \times d}$ as Fourier Graph Operator (FGO), where $\mathcal{F}$ denotes %Discrete Fourier Transform (DFT).
%\end{myDef} 
\begin{myDef}[\textbf{Fourier Graph Operator}]
\label{def:fgso}
Given a graph $G=({X}, A)$ with node features ${X}\in\mathbb{R}^{n\times d}$ and the adjacency matrix $A \in\mathbb{R}^{n\times n}$, where $n$ is the number of nodes and $d$ is the number of features, we introduce a weight matrix $W\in\mathbb{R}^{d\times d}$ to acquire a tailored Green’s kernel $\kappa: [n]\times [n] \xrightarrow{} \mathbb{R}^{d\times d}$ with $\kappa[i,j]:=A_{ij}\circ {W}$ and $\kappa[i,j]=\kappa[i-j]$. We define $\mathcal{S}_{A, W}:=\mathcal{F}(\kappa)\in\mathbb{C}^{n\times d \times d}$ as a Fourier Graph Operator (FGO), where $\mathcal{F}$ denotes Discrete Fourier Transform (DFT).
\end{myDef}
According to the convolution theorem~\cite{1970An} (see Appendix \ref{convolution_theorem_appendix}), we can write the multiplication between $\mathcal{F}({X})$ and FGO $\mathcal{S}_{A,W}$ in Fourier space as:
\begin{equation}\label{equ:fgso}
    \mathcal{F}({X})\mathcal{F}(\kappa)=\mathcal{F}((X*\kappa)[i])=\mathcal{F}(\sum_{j=1}^n {{X}}[j]\kappa[i-j])=\mathcal{F}(\sum_{j=1}^n {{X}}[j]\kappa[i,j]), \quad\quad   \forall i \in [n]
\end{equation}
where $(X*\kappa)[i]$ denotes the convolution of $X$ and $\kappa$. As defined $\kappa[i,j]=A_{ij}\circ {W}$, it yields $\sum_{j=1}^n {X}[j]\kappa[i,j]=\sum_{j=1}^n A_{ij}{X}[j]W=AXW$. Accordingly, we can get the convolution equation:
\begin{equation}\label{equ:gcn_unit}
    \mathcal{F}(X)\mathcal{S}_{A,W} = \mathcal{F}(AXW).
\end{equation}
In particular, turning to our case of the fully-connected hypervariate graph, we can adopt a $n$-invariant FGO $\mathcal{S}\in\mathbb{C}^{d \times d}$ that has a computationally low cost compared to previous $\mathbb{C}^{n \times d\times d}$. We provide more details and explanations  in Appendix \ref{intp_FGO}.%For simplicity, we denote the frequency-invariant FGO as  in the following.

%the Fourier transform of a convolution of two sequences equals to the pointwise product of their Fourier transforms.

From Equation (\ref{equ:gcn_unit}), we can observe that performing the multiplication between $\mathcal{F}(X)$ and FGO $\mathcal{S}$ in Fourier space corresponds to a graph shift operation (i.e., a graph convolution) in the time domain~\cite{MateosSMR19}. Since the multiplications in Fourier space ($\mathcal{O}(n)$) have much lower complexity than the above shift operations ($\mathcal{O}(n^2)$) in the time domain (See \textit{Complexity Analysis} below), it motivates us to develop a highly efficient graph neural network in Fourier space.

%The recursive composition has a nice property that $\mathcal{S}_{0:k}=\mathcal{S}_{0:k-1}\mathcal{S}_k$, which inspires us to design a complex-valued feed forward network with $\mathcal{S}_k$ being the complex weights for the $k$-th layer. However, both the FGSO and edge-variant graph filter are linear transformations, limiting the capability of modeling non-linear information diffusion on graphs. Following by the convention in GNNs, we introduce the non-linear activation and biases to reformulate the $k$-the layer as follows: 
%\begin{equation}
%    \label{equ:layer}
%    \mathcal{X}_k = \sigma(\mathcal{X}_{k-1}\mathcal{A}_k+b_k)
%\end{equation}
%with the complex weight $\mathcal{A}_k \in \mathbb{C}^{d\times d}$, biases $b_k \in \mathbb{C}^{d}$ and the activation function $\sigma$.
To this end, we propose the \textit{Fourier Graph Neural Networks} (FourierGNN) based on FGO. Specifically, by stacking multiple layers of FGOs, we can define the $K$-layer Fourier graph neural networks given a graph $G = (X,A)$ with node features $X\in \mathbb{R}^{n \times d}$ and the adjacency matrix $A \in \mathbb{R}^{n \times n}$  as: 
\begin{equation} \label{equ:fourier_gnn}
\operatorname{FourierGNN}(X, A) :=
%\mathcal{F}(X)\mathcal{S}_0+
    \sum_{k=0}^{K}\sigma(\mathcal{F}(X)\mathcal{S}_{0:k}+b_k),  \quad \mathcal{S}_{0:k}=\prod_{i=0}^k\mathcal{S}_i.
    %\sum_{k=0}^K \mathcal{X}_{k} \quad s.t.\ \mathcal{X}_k=\sigma(\mathcal{X}_{k-1}\mathcal{S}_k+b_k)
\end{equation}
%\TODO{ include A, add more explanation with many words, F, S, b, why sum, why K-order, why multiply $S_o:k$}
Herein, $\mathcal{S}_k$ is the FGO in the $k$-th layer, satisfying $\mathcal{F}(X)\mathcal{S}_k=\mathcal{F}(A_k X W_k)$ with $W_k \in \mathbb{R}^{d \times d}$ being the weights and  $A_k\in\mathbb{R}^{n\times n}$ corresponding to the $k$-th adjacency matrix sharing the same sparsity pattern of $A$, and $b_{k}\in\mathbb{C}^{d}$ are the complex-valued biases parameters; $\mathcal{F}$ stands for Discrete Fourier Transform; $\sigma$ is the activation function. In particular, $\mathcal{S}_0$, $W_0$, $A_0$ are the identity matrix, and we adopt identical activation at $k=0$ to obtain residual $\mathcal{F}(X)$. All operations in FourierGNN are performed in Fourier space. Thus, all parameters, i.e., $\{\mathcal{S}_k,b_k\}_{k=1}^K$, are complex numbers.

The core operation of FourierGNN is the summation of recursive multiplications with nonlinear activation functions. Specifically, 
the recursive multiplications between $\mathcal{F}(X)$ and $\mathcal{S}$, i.e., $\mathcal{F}(X)\mathcal{S}_{0:k}$, are equivalent to the multi-order convolutions on the graph structure (see \textit{Theoretical Analysis} below). Nonlinear activation functions $\sigma$ are introduced to address the capability limitations of modeling nonlinear information diffusion on graphs in the summation.

% The core operation of FourierGNN is the summation of recursive multiplications with non-linear activation functions. Note that the recursive multiplications between $\mathcal{F}(X)$ and $\mathcal{S}$, i.e., $\mathcal{F}(X)\mathcal{S}_{0:k}$ with $\mathcal{S}_{0:k}=\prod_{i=0}^k\mathcal{S}_i$, are equivalent to the multi-order convolutions on the graph structure (see \textit{Theoretical Analysis} below). Non-linear activation functions $\sigma$ is introduced to address the capability limitations of modeling non-linear information diffusion on graphs.% by the linear summation. 
\paragraph{Theoretical Analysis} \label{sec:theoretical_analysis}
We theoretically analyze the effectiveness and interpretability of FourierGNN and verify the validity of its architecture. For convenience, we exclude the non-linear activation function $\sigma$ and learnable bias parameters $b$ from Equation (\ref{equ:fourier_gnn}), and focus on $\mathcal{F}(X)\mathcal{S}_{0:k}$.% of the right part in the equation.

\begin{myPro}%[\textbf{Equivalence}]
\label{pro:filter}
Given a graph $G=(X, A)$ with node features $X \in \mathbb{R}^{n \times d}$ and adjacency matrix $A\in \mathbb{R}^{n \times n}$, the recursive multiplication of FGOs in Fourier space is equivalent to multi-order convolutions in the time domain:
\begin{equation} \label{equ:fgso_filter}
    \mathcal{F}^{-1}({\mathcal{F}(X)\mathcal{S}_{0:k}})= A_{k:0}{X}W_{0:k}, \quad \mathcal{S}_{0:k}=\prod_{i=0}^k\mathcal{S}_i,  A_{k:0}=\prod_{i=k}^0 {A}_i,  W_{0:k}=\prod_{i=0}^k{W}_i
\end{equation}
where $A_0, \mathcal{S}_0, W_0$ are the identity matrix, $A_k\in\mathbb{R}^{n\times n}$ corresponds to the $k$-th diffusion step sharing the same sparsity pattern of $A$, $W_{k} \in \mathbb{R}^{d \times d}$ is the $k$-th weight matrix, $\mathcal{S}_k\in\mathbb{C}^{d\times d}$ is the $k$-th FGO satisfying $\mathcal{F}(A_k {X} W_k)=\mathcal{F}({X})\mathcal{S}_k$, and $\mathcal{F}$ and $\mathcal{F}^{-1}$ denote DFT and its inverse, respectively.
\end{myPro}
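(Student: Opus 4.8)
The plan is to prove the identity by induction on the number of layers $k$, using the single-layer convolution identity of Equation (\ref{equ:gcn_unit}) as the engine. The crucial observation I would rely on is that Equation (\ref{equ:gcn_unit}), namely $\mathcal{F}(X')\mathcal{S}_{A,W}=\mathcal{F}(AX'W)$, holds for an \emph{arbitrary} feature matrix $X'$, because the FGO $\mathcal{S}_{A,W}=\mathcal{F}(\kappa)$ depends only on $A$ and $W$ and not on the signal being transformed. This feature-independence is exactly what lets me feed a progressively transformed signal into the next FGO at each step of the recursion.

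For the base case $k=0$, since $\mathcal{S}_0$, $A_0$, and $W_0$ are all the identity, we have $\mathcal{F}(X)\mathcal{S}_0=\mathcal{F}(X)$, and applying $\mathcal{F}^{-1}$ gives $X=A_0 X W_0$, which matches the claim. For the inductive step I would assume the identity at level $k-1$, which in Fourier space reads $\mathcal{F}(X)\mathcal{S}_{0:k-1}=\mathcal{F}(A_{k-1:0}XW_{0:k-1})$. Then I write $\mathcal{F}(X)\mathcal{S}_{0:k}=\bigl(\mathcal{F}(X)\mathcal{S}_{0:k-1}\bigr)\mathcal{S}_k$ and substitute the inductive hypothesis to obtain $\mathcal{F}(A_{k-1:0}XW_{0:k-1})\mathcal{S}_k$. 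Applying Equation (\ref{equ:gcn_unit}) to the feature matrix $Y:=A_{k-1:0}XW_{0:k-1}$ with the operator $\mathcal{S}_k$ (which satisfies $\mathcal{F}(A_k Y W_k)=\mathcal{F}(Y)\mathcal{S}_k$) yields $\mathcal{F}(A_k A_{k-1:0} X W_{0:k-1} W_k)$. Recognizing $A_k A_{k-1:0}=A_{k:0}$ and $W_{0:k-1}W_k=W_{0:k}$ collapses this to $\mathcal{F}(A_{k:0}XW_{0:k})$, and applying $\mathcal{F}^{-1}$ to both sides finishes the step.

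The main obstacle I anticipate is bookkeeping the noncommutative matrix products correctly rather than any analytic difficulty. The adjacency factors accumulate on the \emph{left} of $X$ and in \emph{decreasing} index order ($A_{k:0}=A_k A_{k-1}\cdots A_0$), whereas the weight factors accumulate on the \emph{right} and in \emph{increasing} order ($W_{0:k}=W_0\cdots W_k$); these opposite conventions arise because $A$ acts by left-multiplication (mixing across nodes) while $W$ acts by right-multiplication (mixing across channels), and the two never interfere. I would therefore be careful that each application of the single-layer identity prepends the new $A_k$ and appends the new $W_k$ in exactly this pattern, so that the telescoping is consistent and the index ranges of $A_{k:0}$ and $W_{0:k}$ line up with the stated definitions. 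A secondary point worth making explicit is that the same FGO $\mathcal{S}_k$ is being reused across all inputs precisely because of its feature-independence noted above, which is what legitimizes invoking Equation (\ref{equ:gcn_unit}) on the intermediate signal $Y$ rather than only on the original $X$.
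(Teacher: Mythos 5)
Your proof is correct and is essentially the same argument as the paper's: both rest on repeatedly applying the defining identity $\mathcal{F}(A_k Y W_k)=\mathcal{F}(Y)\mathcal{S}_k$ to the intermediate signal $Y=A_{k-1:0}XW_{0:k-1}$, with the same left/right bookkeeping of the $A$ and $W$ factors. The only difference is presentational: the paper writes the telescoping chain by peeling the outermost layer $K$ downward, whereas you package the identical step as an induction building upward from $k=0$.
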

% In the time domain, operation $A_{k:0}{X}W_{0:k}$ adopts different weights $W_k\in \mathbb{R}^{d \times d}$ to weigh the information of different neighbors in different diffusion orders, beneficial to capture the time-varying counterparts on graphs~\cite{MateosSMR19,SegarraMMR17,isufi2021edgenets}. 
% This indicates FourierGNN is effective in attending to time-varying correlations among graph nodes, e.g., spatiotemporal dependencies in hypervariate graphs. 
% The proof of Proposition 1 and more explanations of FourierGNN are provided in Appendix \ref{sec:filter}.
In the time domain, operation $A_{k:0}{X}W_{0:k}$ adopts different weights $W_k\in \mathbb{R}^{d \times d}$ to weigh the information of different neighbors in different diffusion orders, beneficial to capture the extensive dependencies on graphs~\cite{MateosSMR19,SegarraMMR17,isufi2021edgenets}.
%This enhances the expressive power of GNNs, enabling them to model complex structures in graphs more effectively.
This indicates FourierGNN is expressive in modeling the complex correlations among graph nodes, i.e., spatiotemporal dependencies in the hypervariate graph. 
The proof of Proposition \ref{pro:filter} and more explanations of FourierGNN are provided in Appendix \ref{sec:filter}.
\paragraph{Complexity Analysis}
The time complexity of $\mathcal{F}(X)\mathcal{S}$ is $\mathcal{O}(nd\operatorname{log}n+nd^2)$, which includes the Discrete Fourier Transform (DFT), the Inverse Discrete Fourier Transform (IDFT), and the matrix multiplication in the Fourier space. Comparatively, 
the time complexity of the equivalent operations of $\mathcal{F}(X)\mathcal{S}$ in the time domain, i.e., $AXW$, is $\mathcal{O}(n^2d+nd^2)$. 
Then, as a $K$-order summation of a recursive multiplication of $\mathcal{F}(X)\mathcal{S}$, FourierGNN, achieves the time complexity of $\mathcal{O}(nd\operatorname{log}n+Knd^2)$, including DFT and IDFT, and the recursive multiplication of FGOs. 
Overall, the Log-linear $\mathcal{O}(n\operatorname{log}n)$ complexity makes FourierGNN much more efficient.
%especially in our hypervariate graph with $n=NT$ nodes. 

\paragraph{FourierGNN vs Other Graph Networks}
We analyze the connection and difference between our FourierGNN with GCN~\cite{KipfW17} and GAT~\cite{gat_2017}. From the complexity perspective, FourierGNN with log-linear complexity shows much higher efficiency than GCN and GAT. 
Regarding the network architecture, we analyze them from two main perspectives:
(1) Domain. GAT implements operations in the time domain, while GCN and FourierGNN are in Fourier space. However, GCN achieves the transformation through the Graph Fourier Transform (GFT), whereas FourierGNN utilizes the Discrete Fourier Transform (DFT).
%GAT calculates attention weights, and GCN performs Laplacian eigendecompositions. Both GAT and GCN perform operations in the time domain, while FourierGNN based on the Discrete Fourier Transform (DFT) performs operations in the Fourier domain.
(2) Information diffusion: GAT aggregates neighbor nodes with varying weights to via attention mechanisms. FourierGNN and GCN update node information via convoluting neighbor nodes. Different from GCN, FourierGNN assigns varying importance to neighbor nodes in different diffusion steps. We provide a detailed comparison in Appendix \ref{compares}.

% We analyze the connection and difference between our FourierGNN with other graph neural networks, including GCN~\cite{KipfW17} and GAT~\cite{gat_2017}. From the complexity perspective, FourierGNN with log-linear complexity shows much higher efficiency than GCN and GAT. 
% Regarding the network architecture, we analyze them from two main perspectives:
% (1) Domain. GAT implements operations in the spatial domain, while GCN and FourierGNN conduct operations in Fourier space. However, GCN achieves the transformation through the Graph Fourier Transform (GFT), whereas FourierGNN utilizes the Discrete Fourier Transform (DFT).
% (2) Information diffusion: GAT relies on attention mechanisms to adaptively assign weights to neighboring nodes.
% FourierGNN and GCN update node information via convoluting neighboring nodes. Different from GCN, FourierGNN assigns varying importance to nodes of the same neighborhood in different diffusion steps.
% We include a more detailed comparison in Appendix \ref{compares}.

% to connect all observed values $\{x_{it}\}$ among variables and timestamps
\begin{figure}[!t]
    \centering
    \includegraphics[width=1.0\textwidth]{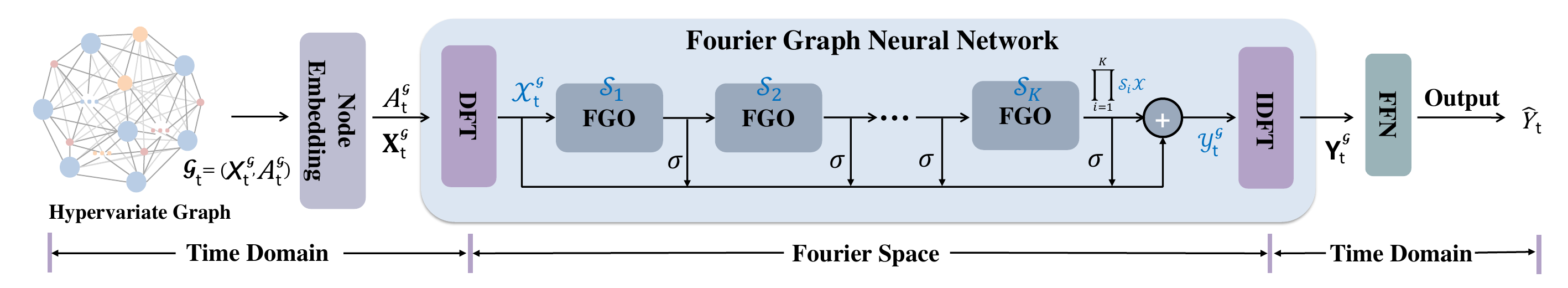}
    \vspace{-3mm}
    \caption{The network architecture of MTS forecasting with FourierGNN (\textcolor{cyan}{blue characters} denote complex values, such as \textcolor{cyan}{$\mathcal{X}_t^{\mathcal{G}}$ $\mathcal{S}_i$}). Given the hypervariate graph $\mathcal{G}=(X^\mathcal{G}_t, A^\mathcal{G}_t)$, we 1) embed nodes of $X^\mathcal{G}_t\in \mathbb{R}^{NT \times 1}$ to obtain node embeddings $\mathbf{X}^\mathcal{G}_t \in \mathbb{R}^{NT \times d}$; 2) feed embedded hypervariate graphs to $\operatorname{FourierGNN}$: 
    (i) transform $\mathbf{X}^\mathcal{G}_t$ with DFT to $\mathcal{X}^\mathcal{G}_t \in \mathbb{C}^{NT \times d}$; 
    (ii) conduct recursive multiplications and make summation to output $\mathcal{Y}^\mathcal{G}_t$; 
    (iii) transform $\mathcal{Y}^\mathcal{G}_t$ back to time domain by IDFT, resulting in $\mathbf{Y}^\mathcal{G}_t \in \mathbb{R}^{NT \times d}$; 
    3) generate $\tau$-step predictions $\hat{Y}_t \in \mathbb{R}^{N \times \tau}$ via feeding $\mathbf{Y}^\mathcal{G}_t$ to fully-connected layers.}
    \label{fig:model_fig}
    \vspace{-3mm}
\end{figure}

\subsection{Multivariate Time Series Forecasting with FourierGNN} \label{sec:ovarc}
In this section, we instantiate FourierGNN for MTS forecasting.
The overall architecture of our model is illustrated in Figure \ref{fig:model_fig}. Given the MTS input data $X_t \in \mathbb{R}^{N\times T}$, first we construct a fully-connected \textit{hypervariate graph} $\mathcal{G}_t=(X^{\mathcal{G}}_t, A^{\mathcal{G}}_t)$ with $X^{\mathcal{G}}_t \in \mathbb{R}^{NT \times 1}$ and $A^\mathcal{G}_t \in \{1\}^{n\times n}$. Then, we project $X^\mathcal{G}_t$ into node embeddings $\mathbf{X}^{\mathcal{G}}_t \in \mathbb{R}^{NT\times d}$ by assigning a $d$-dimension vector for each node using an embedding matrix $E_\phi\in\mathbb{R}^{1\times d}$, i.e., $\mathbf{X}^{\mathcal{G}}_t=X^{\mathcal{G}}_t \times E_\phi$. 

Subsequently, to capture the spatiotemporal dependencies simultaneously, we aim to feed multiple embeded hypervariate graphs with $\mathbf{X}^{\mathcal{G}}_t$ to {FourierGNN}. First, we perform Discrete Fourier Transform (DFT) $\mathcal{F}$ on each discrete spatio-temporal dimension of the embeddings $\mathbf{X}^{\mathcal{G}}_t$ and obtain the frequency output $\mathcal{X}^{\mathcal{G}}_t:=\mathcal{F}(\mathbf{X}^{\mathcal{G}}_t) \in \mathbb{C}^{NT\times d}$. Then, we perform a recursive multiplication between $\mathcal{X}_t^{\mathcal{G}}$ and FGOs $\mathcal{S}_{0:k}$ in Fourier space and output the resulting representations $\mathcal{Y}^{\mathcal{G}}_t$ as:
\begin{equation} 
    \mathcal{Y}^{\mathcal{G}}_t = \operatorname{FourierGNN}(\textbf{X}_t^{\mathcal{G}},A_t^{\mathcal{G}}) =
    \sum_{k=0}^{K}\sigma(\mathcal{F}(\mathbf{X}_t^{\mathcal{G}})\mathcal{S}_{0:k}+b_k), \quad \mathcal{S}_{0:k}=\prod_{i=0}^k\mathcal{S}_i.
\end{equation}
Then $\mathcal{Y}^{\mathcal{G}}_t$ are transformed back to the time domain using Inverse Discrete Fourier Transform (IDFT) $\mathcal{F}^{-1}$, which yields $\mathbf{Y}^{\mathcal{G}}_t:=\mathcal{F}^{-1}(\mathcal{Y}^{\mathcal{G}}_t)\in \mathbb{R}^{NT\times d}$.

Finally, according to the FourierGNN output $\mathbf{Y}^{\mathcal{G}}_t$ which encodes spatiotemporal inter-dependencies, we use two layer feed-forward networks (FFN) (see Appendix \ref{subsec:es} for more details) to project it onto $\tau$ future steps, resulting in $\hat{Y}_t = \operatorname{FFN}(\mathbf{Y}^{\mathcal{G}}_t) \in \mathbb{R}^{N\times \tau}$.
\vspace{-2mm}

\section{Experiments}
To evaluate the performance of FourierGNN, we conduct extensive experiments on seven real-world time series benchmarks to compare with state-of-the-art graph neural network-based methods. 
\vspace{-2mm}
\subsection{Experimental Setup}
\label{subsec:setup}
\textbf{Datasets}. We evaluate our proposed method on seven representative datasets from various application scenarios, including traffic, energy, web traffic, electrocardiogram, and COVID-19. All datasets are normalized using the min-max normalization. Except the COVID-19 dataset, we split the other datasets into training, validation, and test sets with the ratio of 7:2:1 in a chronological order. For the COVID-19 dataset, the ratio is 6:2:2. More detailed information about datasets are in Appendix \ref{appendix_datasets}.

\begin{table*}[ht]
    \centering
    \caption{Overall performance of forecasting models on the six datasets.}
    \resizebox{\textwidth}{!}{
    \begin{tabular}{l|c c c|c c c|c c c}
    \toprule
         \multirow{2}{*}{\diagbox{{Models}}{{Datasets}}} & \multicolumn{3}{c|}{Solar} &  \multicolumn{3}{c|}{Wiki} & \multicolumn{3}{c}{Traffic}\\
         & MAE & RMSE & MAPE(\%) &  MAE & RMSE & MAPE(\%) & MAE & RMSE & MAPE(\%) \\
         \midrule
         VAR \cite{Vector1993} & 0.184& 0.234 & 577.10 & 0.057& 0.094 & 96.58 & 0.535 &1.133 & 550.12 \\
         SFM \cite{ZhangAQ17}& 0.161 & 0.283 & 362.89 & 0.081 & 0.156 &104.47 & 0.029 & 0.044 &59.33 \\
         LSTNet \cite{Lai2018} & 0.148 &0.200 & 132.95 & 0.054 &0.090 &118.24 & 0.026 &0.057 & \textbf{25.77}\\
         TCN \cite{bai2018} & 0.176 & 0.222 &142.23 & 0.094 & 0.142 & 99.66 & 0.052 & 0.067 & -\\
         DeepGLO \cite{Sen2019} & 0.178 &0.400 & 346.78 & 0.110 & 0.113 &119.60 & 0.025 & 0.037 & 33.32 \\
         Reformer \cite{reformer20} & 0.234 & 0.292 & 128.58&0.048 & 0.085 & \underline{73.61} & 0.029&0.042 & 112.58\\
         Informer \cite{Zhou2021} & 0.151 & {0.199} &128.45 & 0.051 & 0.086 & 80.50 & 0.020 &0.033 & 59.34\\
         Autoformer \cite{autoformer21} & 0.150& 0.193 &103.79 &0.069 & 0.103& 121.90& 0.029 & 0.043 & 100.02\\
         FEDformer \cite{fedformer22} & 0.139&\underline{0.182} & \textbf{100.92} &0.068&0.098 & 123.10&0.025&0.038 & 85.12\\
         GraphWaveNet \cite{zonghanwu2019}& 0.183 & 0.238 & 603 & 0.061 & 0.105 &136.12 & \underline{0.013} & 0.034 & 33.78\\
         StemGNN \cite{Cao2020}& 0.176 &0.222 & {128.39} & 0.190 &0.255 & 117.92 & 0.080 &0.135 & 64.51 \\
         MTGNN \cite{wu2020connecting}& 0.151 & 0.207 &507.91 & 0.101 & 0.140 & 122.96 & 0.013 & \underline{0.030} & 29.53\\
         AGCRN \cite{Bai2020nips}& \underline{0.123} &0.214 & 353.03 & \underline{0.044} & \underline{0.079} & {78.52} & 0.084 & 0.166 & 31.73\\
         \midrule
         \textbf{FourierGNN} & \textbf{0.120} &\textbf{0.162} & {116.48} & \textbf{0.041} &\textbf{0.076} & \textbf{64.50} & \textbf{0.011} &\textbf{0.023} & 28.71\\
         %Improvement &2.4\% &11.0\% & -& 6.8\% &3.8\% & 12.4\%& 15.4\% &23.3\% & -\\
         \bottomrule
          \multirow{2}{*}{\diagbox{{Models}}{{Datasets}}} & \multicolumn{3}{c|}{ECG} &  \multicolumn{3}{c|}{Electricity} & \multicolumn{3}{c}{COVID-19} \\
         & MAE & RMSE & MAPE(\%) &  MAE & RMSE & MAPE(\%) & MAE & RMSE & MAPE(\%)\\
         \midrule
         VAR \cite{Vector1993} & 0.120& 0.170 & 22.56 & 0.101& 0.163 & 43.11 & 0.226 & 0.326&191.95 \\
         SFM \cite{ZhangAQ17} & 0.095 & 0.135 & 24.20 &0.086 &0.129 & 33.71 &0.205 &0.308 &76.08  \\
         LSTNet \cite{Lai2018} & 0.079 &0.115 & 18.68 & 0.075 &0.138 & 29.95 & 0.248  & 0.305 & 89.04\\
         TCN \cite{bai2018} & 0.078 & 0.107 & 17.59 & 0.057 & 0.083 & 26.64 & 0.317 & 0.354 & 151.78 \\
         DeepGLO \cite{Sen2019} & 0.110 &0.163 & 43.90 & 0.090 & 0.131 & 29.40& 0.169 & 0.253 & \underline{75.19}\\
         Reformer \cite{reformer20} & 0.062& 0.090 & 13.58 &0.078 & 0.129 &33.37 &\underline{0.152} & \underline{0.209} &132.78\\
         Informer \cite{Zhou2021} & 0.056 &0.085 & 11.99 & 0.070 & 0.119 &32.66 & 0.200 & 0.259 &155.55\\
         Autoformer \cite{autoformer21} & 0.055 &0.081 & 11.37 &{0.056} & {0.083} & 25.94&0.159 & 0.211&136.24 \\
         FEDformer \cite{fedformer22} & 0.055 & 0.080 & \underline{11.16} & \underline{0.055}& \underline{0.081} & \underline{25.84}& 0.160&0.219 &134.45\\
         GraphWaveNet \cite{zonghanwu2019} & 0.093 & 0.142 &40.19 & 0.094 & 0.140 & 37.01  & 0.201 & 0.255 & 100.83\\
         StemGNN \cite{Cao2020} & 0.100 &0.130 & 29.62 & 0.070 & 0.101 & - & 0.421 & 0.508 & 141.01\\
         MTGNN \cite{wu2020connecting} & 0.090 & 0.139 & 35.04 & 0.077 & 0.113 & 29.77 & 0.394 & 0.488 & 88.13\\
         AGCRN \cite{Bai2020nips} & \underline{0.055} & \underline{0.080} & {11.75} & 0.074 & 0.116 & {26.08} & 0.254 & 0.309 & {83.37}\\
         %TAMP-S2GNets  \cite{tampsgcnets2022} &- & -&- &  -& -& -& 0.180 & \underline{0.230} & \textbf{65.76}\\
         \midrule
         \textbf{FourierGNN} & \textbf{0.052} &\textbf{0.078} & \textbf{10.97} & \textbf{0.051} &\textbf{0.077} & \textbf{24.28} & \textbf{0.123}  & \textbf{0.168} & \textbf{71.52}\\
         %Improvement &5.5\% &2.5\% & 1.7\%& 7.3\% &4.9\% & 6.0\%& 19.1\% &19.6\% & - \\
    \bottomrule
    \end{tabular}}
    \label{tab:result}
    \vspace{-3mm}
\end{table*}

% \begin{table}[ht]
%     \centering
%     \caption{Summary of datasets.}
%     \resizebox{\textwidth}{12mm}{
%     \begin{tabular}{c c c c c c c c}
%     \toprule
%     Datasets & Solar & Wiki & Traffic & ECG & Electricity & COVID-19 & METR-LA\\
%     \midrule
%       Samples & 3650 & 803& 10560& 5000 & 140211 & 335 & 34272\\
%       Variables & 592 & 2000& 963& 140 & 370 & 55 & 207 \\
%       Granularity & 1hour & 1day& 1hour& - & 15min& 1day & 5min\\
%       Start time & 01/01/2006 & 01/07/2015&01/01/2015 & - &01/01/2011 & 01/02/2020 & 01/03/2012\\
%     \bottomrule
%     \end{tabular}}
%     \label{tab:datasets}
% \end{table}

\textbf{Baselines}. We conduct a comprehensive comparison of the forecasting performance between our FourierGNN and several representative and state-of-the-art (SOTA) models on the seven datasets, including classic method VAR \cite{Vector1993}, deep learning-based models such as SFM \cite{ZhangAQ17}, LSTNet \cite{Lai2018}, TCN \cite{bai2018}, DeepGLO \cite{Sen2019}, and CoST~\cite{cost22}. We also compare FourierGNN against GNN-based models like GraphWaveNet \cite{zonghanwu2019}, StemGNN \cite{Cao2020}, MTGNN \cite{wu2020connecting}, and AGCRN \cite{Bai2020nips}, and two representative Transformer-based models like Reformer \cite{reformer20} and Informer \cite{Zhou2021}, as well as two frequency enhanced Transformer-based models including Autoformer \cite{autoformer21} and FEDformer \cite{fedformer22}. In addition, we compare FourierGNN with SOTA models such as TAMP-S2GCNets \cite{tampsgcnets2022}, DCRNN \cite{LiYS018}, and STGCN \cite{Yu2018}, which require pre-defined graph structures. %For further implementation details of the adopted baselines, please refer to Appendix \ref{subsec:baselines}.
Please refer to Appendix \ref{subsec:baselines} for more implementation details of the adopted baselines.

\textbf{Experimental Settings}. All experiments are conducted in Python using Pytorch 1.8~\cite{PYTORCH19} (except for SFM~\cite{ZhangAQ17} which uses Keras) and performed on single NVIDIA RTX 3080 10G GPU. Our model is trained using RMSProp with a learning rate of $10^{-5}$ and MSE (Mean Squared Error) as the loss function. The best parameters for all comparative models are chosen through careful parameter tuning on the validation set. We use Mean Absolute Errors (MAE), Root Mean Squared Errors (RMSE), and Mean Absolute Percentage Error (MAPE) to measure the performance. The evaluation details are in Appendix \ref{appendix_metrics} and more experimental settings are in Appendix \ref{subsec:es}.
\vspace{-3mm}
%More details about the datasets, baselines and experimental settings can be found in Appendix \ref{Experiment_detail}.

\subsection{Main Results}
We present the evaluation results with an input length of 12 and a prediction length of 12 in Table \ref{tab:result}. Overall, FourierGNN achieves a new state-of-the-art on all datasets. On average, FourierGNN makes an improvement of 9.4\% in MAE and 10.9\% in RMSE compared to the best-performing across all datasets. 
Among these baselines, Reformer, Informer, Autoformer, and FEDformer are Transformer-based models that demonstrate competitive performance on Electricity and COVID-19 datasets, as they excel at capturing temporal dependencies. 
% \begin{wrapfigure}{r}{0.55\linewidth}
% \centering
% \vspace{-5mm}
%     \centering
%     \subfigure[MAE]
%     {
%         \centering
%         \includegraphics[width=0.47\linewidth]{}
%     }
%     \hspace{-4mm}
%     \subfigure[RMSE]
%     {
%         \centering
%         \includegraphics[width=0.47\linewidth]{}
%     }
%  \caption{Performance comparison in different prediction lengths on the ECG dataset.}
% \label{fig:ecg_multistep}
%  \vspace{-2mm}
% \end{wrapfigure} 
However, they have limitations in capturing the spatial dependencies explicitly.
GraphWaveNet, MTGNN, StemGNN, and AGCRN are GNN-based models that show promising results on Wiki, Traffic, Solar, and ECG datasets, primarily due to their capability to handle spatial dependencies among variates. 
However, they are limited in their capacity to simultaneously capture spatiotemporal
dependencies. FourierGNN outperforms the baseline models since it can learn comprehensive spatiotemporal dependencies simultaneously and attends to time-varying dependencies among variates. 
\vspace{-2mm}
%In addition, we compare FourierGNN with the baseline models under the different prediction lengths on the ECG dataset, as shown in Figure  \ref{fig:ecg_multistep}. It reports that FourierGNN achieves the best performances (MAE and RMSE) for all prediction lengths. 

% To further evaluate the performance of our model FourierGNN in multi-step forecasting, we conduct more experiments on the COVID-19. We compare FourierGNN with other GNN-based MTS models (including StemGNN, AGCRN, GraphWaveNet, MTGNN and TAMP-S2GCNets) and representation learning model (CoST) on the COVID-19 dataset under different prediction lengths, and the results are shown in Table \ref{tab:covid_result}. From the table, we can find that FourierGNN achieves the best MAE and RMSE on all the prediction lengths. On average, FourierGNN has 30.0\% and 27.9\% improvement on MAE and RMSE respectively over the best baseline, i.e., TAMP-S2GCNets. Among these models, TAMP-S2GCNets requiring a pre-defined graph topology achieves competitive performance since it enhances the resultant graph learning mechanisms with a multi-persistence. However, it constructs the graph in the spatial dimension, while our model FourierGNN adaptively learns a supra-graph connecting any two variables at any two timestamps, which is effective and more powerful to capture high-resolution spatial-temporal dependencies.
\begin{table*}[!t]
    \centering
    \caption{Performance comparison under different prediction lengths on the COVID-19 dataset.% We compare FourierGNN with other five GNN-based baseline models on the COVID-19 dataset when prediction length is 3, 6, 9, and 12, respectively.
    }
    \resizebox{\textwidth}{!}{
    \begin{tabular}{l|c c c|c c c|c c c|c c c}
    \toprule
       Length & \multicolumn{3}{c|}{3} &  \multicolumn{3}{c|}{6} & \multicolumn{3}{c|}{9}  & \multicolumn{3}{c}{12}\\
       Metrics & MAE &RMSE&MAPE(\%)&  MAE& RMSE&MAPE(\%)& MAE &RMSE&MAPE(\%) & MAE& RMSE&MAPE(\%)\\
       \midrule
         GraphWaveNet \cite{zonghanwu2019} & \underline{0.092} & \underline{0.129} & 53.00& \underline{0.133} & \underline{0.179} & 65.11& 0.171 &0.225 & 80.91& 0.201 &0.255&100.83\\
         StemGNN \cite{Cao2020} & 0.247 &0.318& 99.98& 0.344 &0.429& 125.81& 0.359 &0.442&131.14 & 0.421 &0.508&141.01\\
         AGCRN \cite{Bai2020nips} & 0.130 &0.172 &76.73 & 0.171 &0.218& 79.07& 0.224 &0.277 & 82.90& 0.254 & 0.309 & {83.37}\\
         MTGNN \cite{wu2020connecting}& 0.276 &0.379&91.42 & 0.446 &0.513& 133.49& 0.484 &0.548& 139.52& 0.394 &0.488&88.13 \\
         TAMP-S2GCNets \cite{tampsgcnets2022} & 0.140 & 0.190 & \textbf{50.01} & 0.150 & 0.200 & \textbf{55.72}& \underline{0.170} & \underline{0.230} & \underline{71.78}& \underline{0.180} & \underline{0.230} & \textbf{65.76}\\
         CoST \cite{cost22} & 0.122 & 0.246 & 68.74 & 0.157 & 0.318 & 72.84 & 0.183 & 0.364 & 77.04 & 0.202 & 0.377 & 80.81\\
         \midrule
         \textbf{FourierGNN(ours)} & \textbf{0.071}  &\textbf{0.103} & 61.02 & \textbf{0.093} &  \textbf{0.131} & 65.72& \textbf{0.109}  &\textbf{0.148} & \textbf{69.59} & \textbf{0.123} &\textbf{0.168} & 71.52\\
         \bottomrule
    \end{tabular}
    }
    \label{tab:covid_result}
    \vspace{-3mm}
\end{table*}

\paragraph{Multi-Step Forecasting} 
To further evaluate the performance in multi-step forecasting, we compare FourierGNN with other GNN-based MTS models (including StemGNN~\cite{Cao2020}, AGCRN~\cite{Bai2020nips}, GraphWaveNet~\cite{zonghanwu2019}, MTGNN~\cite{wu2020connecting}, and TAMP-S2GCNets~\cite{tampsgcnets2022}) and a representation learning model (CoST~\cite{cost22}) on COVID-19 dataset under different prediction lengths, and the results are shown in Table \ref{tab:covid_result}. It shows that FourierGNN achieves an average 30.1\% and 30.2\% improvement on MAE and RMSE respectively over the best baseline. In Appendix \ref{more_results}, we include more experiments and analysis under different prediction lengths, and further compare FourierGNN with models that require pre-defined graph structures.
\vspace{-2mm}

\subsection{Model Analysis}
\label{sec:analysis}
%Table \ref{tab:time_complexity} gives the theoretical analysis of the time complexity and memory usage of our model. 

\paragraph{Efficiency Analysis} We investigate the parameter volumes and training time costs of FourierGNN, StemGNN \cite{Cao2020}, AGCRN \cite{Bai2020nips}, GraphWaveNet \cite{zonghanwu2019}, and MTGNN \cite{wu2020connecting} on two representative datasets, including the Wiki dataset and the Traffic dataset. The results are reported in Table \ref{tab:efficiency}, showing the comparison of parameter volumes and average time costs over five rounds of experiments. In terms of parameters, FourierGNN exhibits the lowest volume of parameters among the comparative models. Specifically, it achieves a reduction of 32.2\% and 9.5\% in parameters  compared to GraphWaveNet on Traffic and Wiki datasets, respectively. This reduction is mainly attributed that FourierGNN has shared scale-free parameters for each node. Regarding training time, FourierGNN runs much faster than all baseline models, and it demonstrates efficiency improvements of 5.8\% and 23.3\% over the fast baseline GraphWaveNet on Traffic and Wiki datasets, respectively. Considering variate number of Wiki dataset is about twice larger than that of Traffic dataset, FourierGNN exhibits larger efficiency superiority with the baselines. These findings highlight the high efficiency of FourierGNN in computing graph operations and its scalability to large datasets with extensive graphs, which is important for the pure graph formulation due to the larger size of hypervariate graphs with $NT$ nodes. 
%{Importantly}, it should be noted that the hypervariate graph in FourierGNN has $NT$ nodes, which is significantly larger than the graphs in the baseline models with only $N$ nodes.

\begin{table}[ht]
    \vspace{-2mm}
    \centering
    \small
    \caption{Comparisons of parameter volumes and training time costs on datasets Traffic and Wiki.}
    \begin{tabular}{c c c | c c c}
    \toprule
        & \multicolumn{2}{c}{Traffic} & \multicolumn{2}{c}{Wiki} \\
        Models & Parameters & Training (s/epoch) & Parameters & Training (s/epoch)\\
    \midrule
        StemGNN & $1,606,140$ & 185.86$\pm$2.22 & $4,102,406$ & 92.95$\pm$1.39\\
        MTGNN & $707,516$ & 169.34$\pm$1.56 & $1,533,436 $& 28.69$\pm$0.83\\
        AGCRN & $749,940$ & 113.46$\pm$1.91 & $755,740$ & 22.48$\pm$1.01\\
        GraphWaveNet & $280,860$ & 105.38$\pm$1.24 & $292,460$ & 21.23$\pm$0.76\\
        \midrule
        FourierGNN & $190,564$ & 99.25$\pm$1.07 & $264, 804$ & 16.28$\pm$0.48\\
    \bottomrule
    \end{tabular}
    \label{tab:efficiency}
    \vspace{-3mm}
\end{table}

\paragraph{Ablation Study} We perform an ablation study on the METR-LA dataset to assess the individual contributions of different components in FourierGNN. The results, presented in Table \ref{tab:metr_ablation}, validate the effectiveness of each component. Specifically, \textbf{w/o Embedding} emphasizes the significance of performing node embedding to improve model generalization. \textbf{w/o Dynamic FGO} using the same FGO verifies the effectiveness of applying different FGOs in capturing time-varying dependencies. In addition, \textbf{w/o Residual} represents FourierGNN without the $K=0$ layer, while \textbf{w/o Summation} adopts the last order (layer) output, i.e., $\mathcal{X}S_{0:K}$, as the output of FourierGNN. These results demonstrate the importance of high-order diffusion and the contribution of multi-order diffusion. More results and analysis of the ablation study are provided in Appendix \ref{appendix_ablation_study}.
\vspace{-2mm}

\begin{table}[ht]
    \centering
    \caption{Ablation study on METR-LA dataset.}
    \scalebox{0.90}{
    \begin{tabular}{c c c c c c}
    \toprule
     metrics & w/o Embedding & w/o Dynamic FGO & w/o Residual & w/o Summation& FourierGNN\\ 
    \midrule
       MAE  & 0.053 & 0.055 & 0.054 & 0.054 & 0.050\\
       RMSE & 0.116 & 0.114& 0.115 &  0.114 & 0.113\\
       MAPE(\%) & 86.73 & 86.69 & 86.75 & 86.62 & 86.30\\
    \bottomrule
    \end{tabular}
    }
    \label{tab:metr_ablation}
    \vspace{-3mm}
\end{table}

\subsection{Visualization}
 To gain a better understanding of the hypervariate graph and FourierGNN in spatiotemporal modeling for MTS forecasting, 
 We conduct visualization experiments on the METR-LA and COVID-19 datasets. Please refer to Appendix \ref{sec:visual_details} for more information on the visualization techniques used.

\begin{figure}[ht]
    \vspace{-3mm}
    \centering
    \subfigure[N=7]
    {
        \centering
        \includegraphics[width=0.23\linewidth]{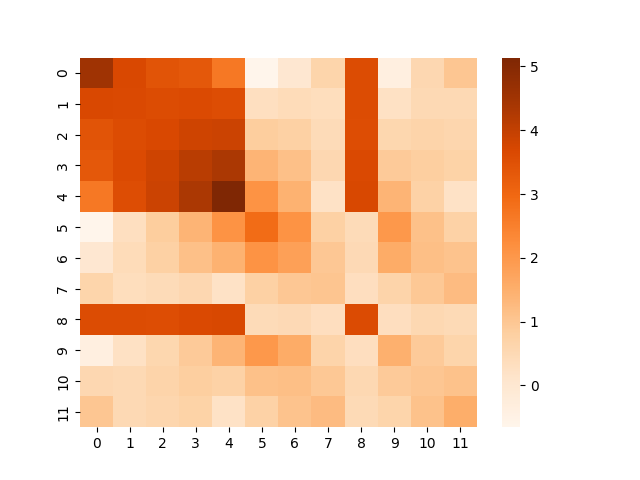}
         \label{N=7}
    }
    \subfigure[N=18]
    {
        \centering
        \includegraphics[width=0.23\linewidth]{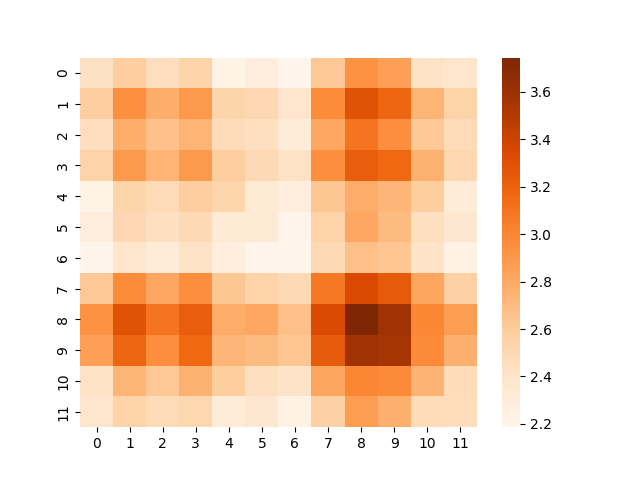}
        \label{N=18}
    }
    \subfigure[N=24]
    {
        \centering
        \includegraphics[width=0.23\linewidth]{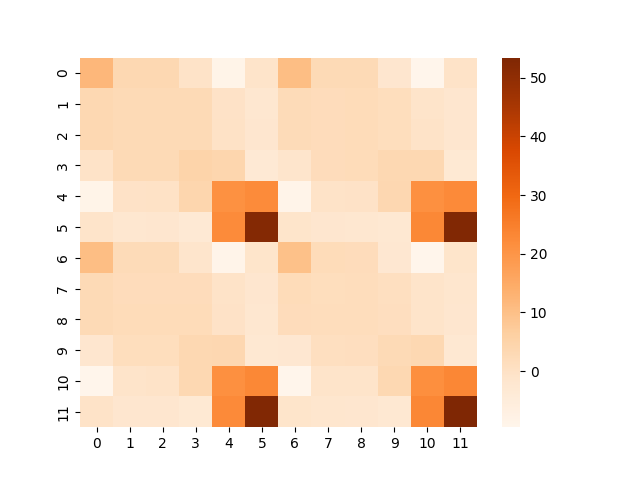}
        \label{N=24}
    }
    \subfigure[N=29]
    {
        \centering
        \includegraphics[width=0.23\linewidth]{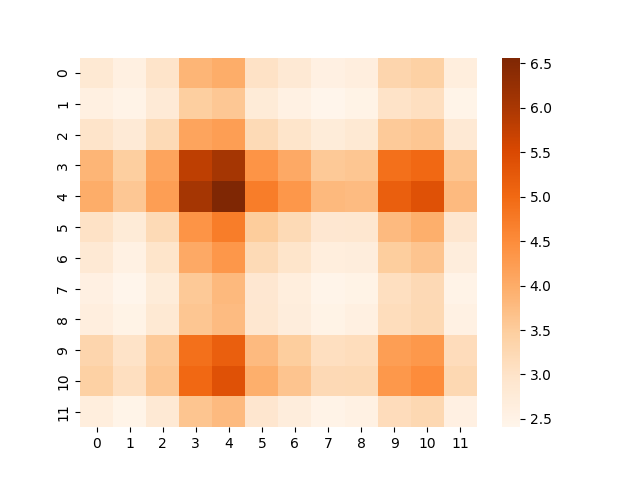}
        \label{N=29}
    }
    \vspace{-3mm}
    
    \subfigure[N=38]
    {
        \centering
        \includegraphics[width=0.23\linewidth]{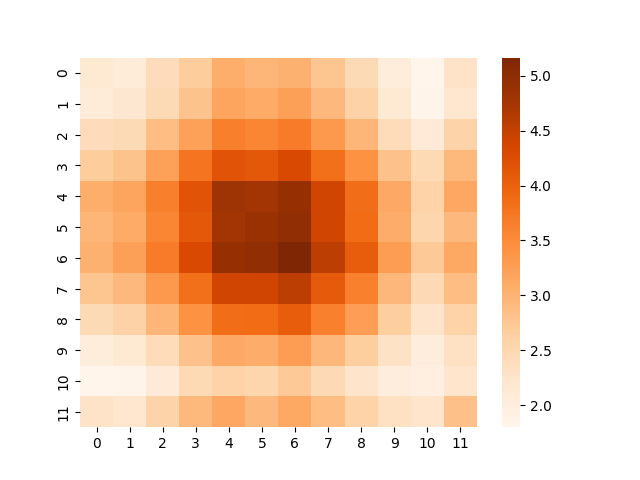}
         \label{N=38}
    }
    \subfigure[N=45]
    {
        \centering
        \includegraphics[width=0.22\linewidth]{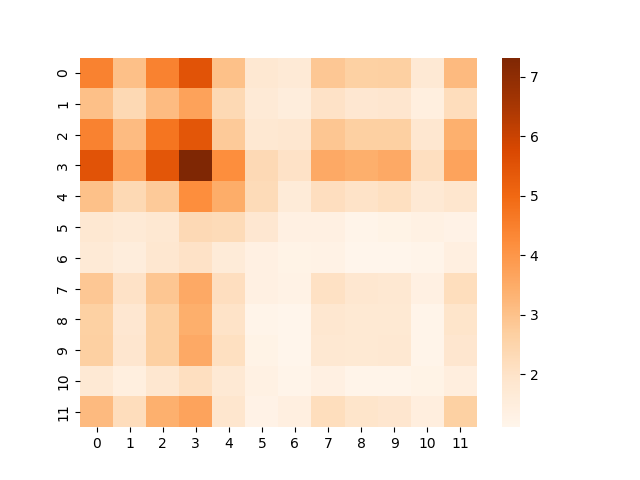}
        \label{N=45}
    }
    \subfigure[N=46]
    {
        \centering
        \includegraphics[width=0.23\linewidth]{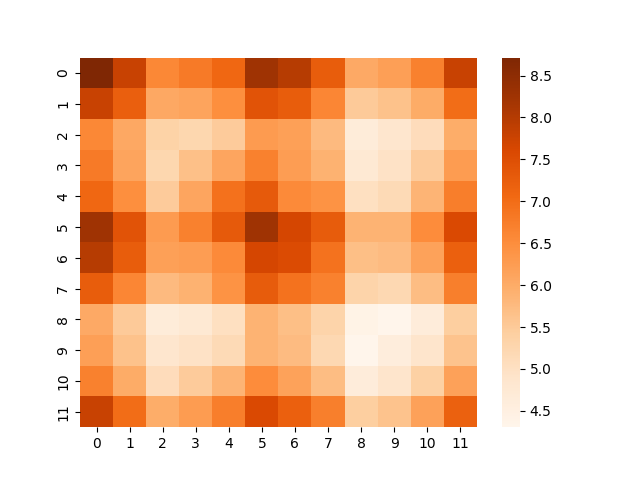}
        \label{N=46}
    }
    \subfigure[N=55]
    {
        \centering
        \includegraphics[width=0.23\linewidth]{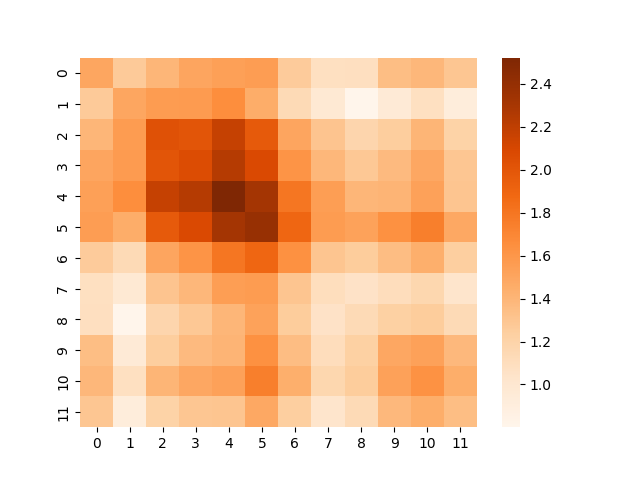}
        \label{N=55}
    }
    \caption{The temporal adjacency matrix of eight variates on COVID-19 dataset.}
    \label{fig:TIME_ADJ}
    \vspace{-5mm}
\end{figure}

\textbf{Visualization of temporal representations learned by FourierGNN}\quad In order to showcase the temporal dependencies learning capability of FourierGNN, we visualize the temporal adjacency matrix of different variates. 
Specifically, we randomly select $8$ counties from the COVID-19 dataset and calculate the relations of $12$ consecutive time steps for each county. Then, we visualize the adjacency matrix by a heatmap, and the results are illustrated in Figure \ref{fig:TIME_ADJ}, where $N$ denotes the index of the country (variate). It shows that FourierGNN learns distinct temporal patterns for each county, indicating that the hypervariate graph can encode rich and discriminative temporal dependencies. 

\textbf{Visualization of spatial representations learned by FourierGNN} \quad To investigate the spatial correlations learning capability of FourierGNN, we visualize the generated adjacency matrix based on the representations learned by FourierGNN on the METR-LA dataset. Specifically, we randomly select 20 detectors and visualize their corresponding adjacency matrix via a heatmap, as depicted in Figure 
\ref{fig:visual_metr}. 
\begin{wrapfigure}{r}{0cm}
\centering
\vspace{-7mm}
 \includegraphics[width=0.55\textwidth]{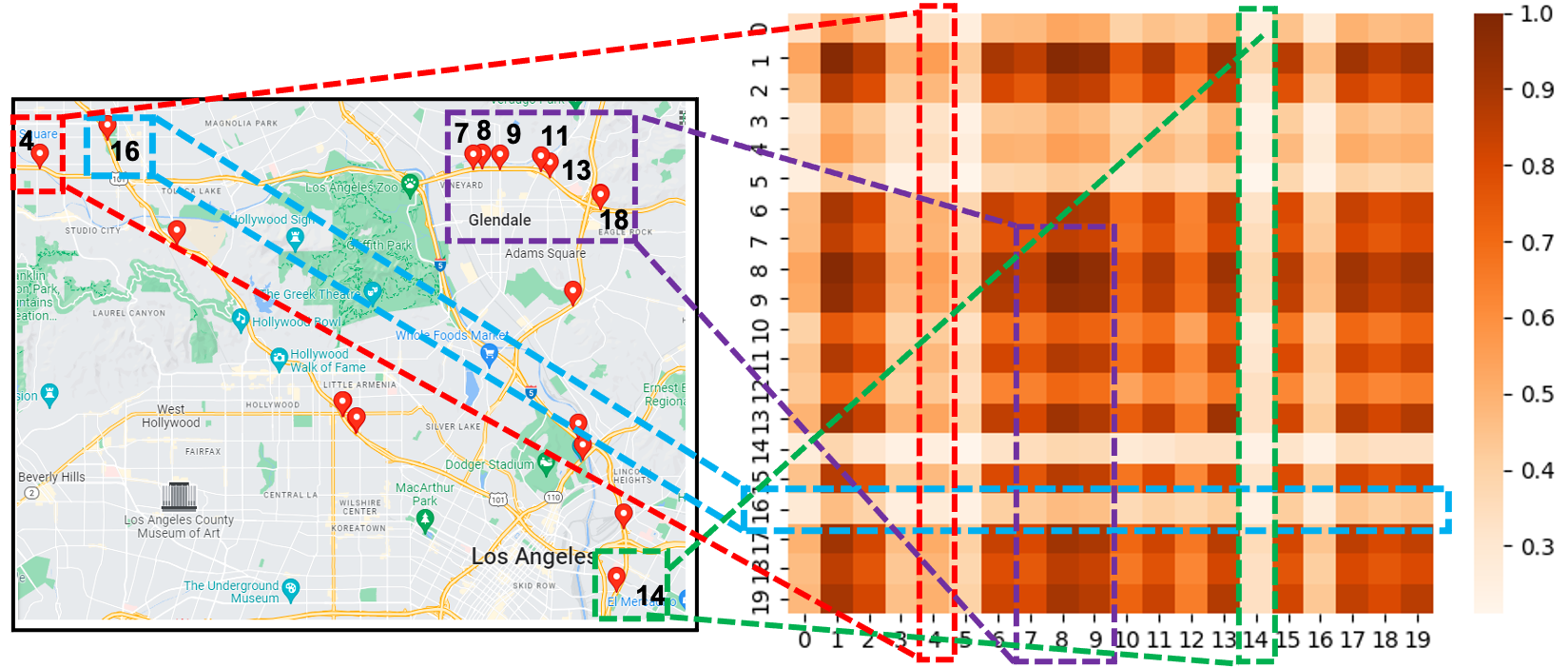}
 \caption{The adjacency matrix (right) learned by FourierGNN and the corresponding road map (left).}
\label{fig:visual_metr}
\vspace{-4mm}
\end{wrapfigure}
By examining the adjacency matrix in conjunction with the actual road map, we observe: 1) the detectors (7, 8, 9, 11, 13, 18) are very close w.r.t. the physical distance, corresponding to the high values of their correlations with each other in the heatmap; 2) the detectors 4, 14 and 16 have small overall correlation values since they are far from other detectors; 3) however, compared with detectors 14 and 16, the detector 4 has slightly higher correlation values to other detectors, e.g., 7, 8, 9, which is because although they are far apart, the detectors 4, 7, 8, 9 are on the same road. The results verify that the hypervariate graph structure can represent highly interpretative correlations.

Moreover, to gain a understanding of how FGO works, we visualize the output of each layer of FourierGNN, and the visualization results demonstrate that FGO can adaptively and effectively capture important patterns while removing noises to a learn discriminative model. Further details can be found in  Appendix \ref{FGO_visualization}.
Additionally, to investigate the ability of FourierGNN to capture time-varying dependencies among variates, we further visualize the spatial correlations at different timestamps. The results illustrate that FourierGNN can effectively attend to the temporal variability in the data. For more information, please refer to Appendix \ref{time_varying_visualization}.

% Based on the insights gained from these visualization results, we can conclude that the hypervariate graph structure exhibits strong capabilities to encode spatiotemporal dependencies. By incorporating FGOs, FourierGNN can effectively attend to and exploit the time-varying dependencies among variates. The synergy between the hypervariate graph structure and FGOs empowers FourierGNN to capture and model intricate spatiotemporal relationships with remarkable effectiveness.
\vspace{-2mm}

\section{Conclusion Remarks}
\label{sec:conc}
\vspace{-3mm}
In this paper, we explore an interesting direction of directly applying graph networks for MTS forecasting from a pure graph perspective. To overcome the previous separate spatial and temporal modeling problem, we build a hypervariate graph, regarding each series value as a graph node, which considers spatiotemporal dynamics unitedly. Then, we formulate time series forecasting on the hypervariate graph and propose FourierGNN by stacking Fourier Graph Operator (FGO) to perform matrix multiplications in Fourier space, which can accommodate adequate learning expressiveness with much lower complexity.
Extensive experiments demonstrate that FourierGNN achieves state-of-the-art performances with higher efficiency and fewer parameters, and the hypervariate graph structure exhibits strong capabilities to encode spatiotemporal inter-dependencies.

%In this paper, we define a Fourier Graph Operator (FGO) and construct the efficient Fourier graph neural networks (FourierGNN) for MTS forecasting. FourierGNN is adopted to 
%learn the spatial-temporal dependencies simultaneously on a supra-graph for MTS forecasting, 
% simultaneously capture high-resolution spatiotemporal dependencies 

%and account for time-varying variate dependencies. This study makes the first attempt to design a complex-valued feed-forward network in the Fourier space for efficiently computing multi-layer graph convolutions. Extensive experiments demonstrate that FourierGNN achieves state-of-the-art performances with higher efficiency and fewer parameters and shows high interpretability in graph representation learning. This study sheds light on efficiently calculating graph operations in Fourier space by learning a Fourier graph shift operator.
 %and constructing Fourier graph networks (FGN).

%\section{Case Study}
%To further investigate

\begin{ack}
The work was supported in part by the National Key Research and Development Program of China under Grant 2020AAA0104903 and 2019YFB1406300,
and National Natural Science Foundation of China under Grant 62072039 and 62272048.
\end{ack}

\bibliography{ref}
\bibliographystyle{unsrt}

\newpage
\appendix
\section{Notation}
\begin{table}[!h]
    \centering
    \renewcommand\arraystretch{1.5}
    \caption{Notations}
    \begin{tabular}{l | l }
    \toprule[1pt]
    %$\mathbb{X}$ & entire multivariate time series data, $\mathbb{X} \in \mathbb{R}^{N \times L}$\\
    $X_t$ & multivariate time series input at timestamps $t$, $X \in \mathbb{R}^{N \times T}$\\
    $x_t$ & the multivariate values of $N$ series at timestamp t, $x_t \in \mathbb{R}^{N}$\\
    $Y_t$ & the next $\tau$ timestamps of multivariate time series, $Y_t \in \mathbb{R}^{N \times \tau}$\\
    $\hat{Y}_t$ & the prediction values of multivariate time series for next $\tau$ timestamps, $\hat{Y}_t \in \mathbb{R}^{N \times \tau}$\\
    $N$ & the number of series\\
    %$L$ & the number of timestamps of $\mathbb{X}$\\
    $T$ & the lookback window size\\
    $\tau$ & the prediction length of multivariate time series forecasting\\
    %$x_{it}$ & the value (node) of the $i$-th variable at $t$ timestamp in $X$ \\
    %$\mathcal{E}$ & the sets of edges attributed to $X$\\
    % $\mathcal{V}$ & the sets of nodes attributed to $X$\\
    $\mathcal{G}_t$ & the hypervariate graph, $\mathcal{G}_t=\{X_t^{\mathcal{G}},A_t^{\mathcal{G}}\}$ attributed to $X_t^{\mathcal{G}}$\\
    $X_t^{\mathcal{G}}$ & the nodes of the hypervariate graph, $X_t^{\mathcal{G}} \in \mathbb{R}^{NT \times 1}$\\
    $A_t^{\mathcal{G}}$ & the adjacency matrix of $\mathcal{G}_t$, $A_t^{\mathcal{G}} \in \mathbb{R}^{NT \times NT}$\\
    $\mathcal{S}$ & the Fourier Graph Operator \\
    $d$ & the embedding dimension\\
    $\mathbf{X}_t^{\mathcal{G}}$ & the embedding of $X_t^{\mathcal{G}}$, $\mathbf{X}_t^{\mathcal{G}} \in \mathbb{R}^{NT \times d}$ \\
    $\mathcal{X}_t^{\mathcal{G}}$ & the spectrum of $\mathbf{X}_t^{\mathcal{G}}$, $\mathcal{X}_t^{\mathcal{G}} \in \mathbb{C}^{NT \times d}$\\
    $\mathcal{Y}_t^{\mathcal{G}}$ & the output of FourierGNN, $\mathcal{Y}_t^{\mathcal{G}} \in \mathbb{C}^{NT \times d}$\\
    $\theta_g$ & the parameters of the graph network \\
    $\theta_t$ & the parameters of the temporal network \\
    $\theta_{\mathcal{G}}$ &  the network parameters for hypervariate graphs\\
    $E_\phi$ & the embedding matrix, $E_\phi \in \mathbb{R}^{1 \times d}$\\
    $\kappa$ & the kernel function\\
    $W$ & the weight matrix\\
    $b$ & the complex bias weights\\
    $\mathcal{F}$ & Discrete Fourier Transform\\
    $\mathcal{F}^{-1}$ & Inverse Discrete Fourier Transform \\
    $\operatorname{F}$ & the forecasting model\\
    %$\tau$ & the prediction length\\
    \bottomrule[1pt]
    \end{tabular}
    \label{notion}
\end{table}

\section{Convolution Theorem} \label{convolution_theorem_appendix}

The convolution theorem \cite{1970An} is one of the most important properties of the Fourier transform. It states the Fourier transform of a convolution of two signals equals the pointwise product of their Fourier transforms in the frequency domain. Given a signal $x[n]$ and a filter $h[n]$, the convolution theorem can be defined as follows:
\begin{equation}\label{convolution_theorem}
    \mathcal{F}((x*h)[n])=\mathcal{F}(x)\mathcal{F}(h)
\end{equation}
where $(x*h)[n]=\sum_{m=0}^{N-1} h[m]x[(n-m)_N]$ denotes the convolution of $x$ and $h$, $(n-m)_N$ denotes $(n-m)$ modulo N, and $\mathcal{F}(x)$ and $\mathcal{F}(h)$ denote discrete Fourier transform of $x[n]$ and $h[n]$, respectively.

\section{Explanations and Proofs} \label{explanation}
\subsection{The Explanations of the Hypervariate Graph Structure} \label{analysis_hypervariate_graph}
Note that the time lag effect between time-series variables is a common phenomenon in real-world multivariate time series scenarios, for example, the time lag influence between two financial assets (e.g. dollar and gold) of a portfolio. It is beneficial but challenging to consider dependencies between different variables under different timestamps.

The hypervariate graph connecting any two variables at any two timestamps aims to encode high-resolution spatiotemporal dependencies. It embodies not only the intra-series temporal dependencies (node connections of each individual variable), inter-series spatial dependencies (node connections under each single time step), and also the time-varying spatiotemporal dependencies (node connections between different variables at different time steps).
By leveraging the hypervariate graph structure, we can effectively learn the spatial and temporal dependencies. This approach is distinct from previous methods that represent the spatial and temporal dependencies separately using two network structures.

%Although the hypervariate graph can represent high-resolution spatiotemporal dependencies, fully connecting all space-time nodes may also introduce noise. However, in this paper, we conduct graph convolutions in Fourier space. 
%The Fourier transform possesses a property called energy compaction, which enables the concentration of most of the signal's energy into a small number of coefficients. This property allows for reducing the impact of noise on the signal.

%Therefore, combining the hypervariate graph structure and Fourier transform, we can effectively 

% \begin{myDef}[\textbf{Fourier Graph Operator}]
% \label{def:fgso}
% Given a graph $G=({X}, A)$ with node features ${X}\in\mathbb{R}^{n\times d}$ and the adjacency matrix $A \in\mathbb{R}^{n\times n}$, where $n$ is the number of nodes and $d$ is the number of features, we introduce a weight matrix $W\in\mathbb{R}^{d\times d}$ to acquire a tailored Green’s kernel $\kappa: [n]\times [n] \xrightarrow{} \mathbb{R}^{d\times d}$ with $\kappa[i,j]:=A_{ij}\circ {W}$ and $\kappa[i,j]=\kappa[i-j]$. We define $\mathcal{S}_{A, W}:=\mathcal{F}(\kappa)\in\mathbb{C}^{n\times d \times d}$ as a Fourier Graph Operator (FGO), where $\mathcal{F}$ denotes Discrete Fourier Transform (DFT).
% \end{myDef}

\subsection{The Interpretation of $n$-invariant FGO}
\label{intp_FGO}
\textbf{Why $\mathcal{F}(\kappa)\in\mathbb{C}^{n\times d\times d}$?} From Definition \ref{def:fgso}, we know that the kernel $\kappa$ is defined as a matrix-valued projection, i.e., $\kappa: [n]\times [n] \xrightarrow{} \mathbb{R}^{d\times d}$. Note that we assume $\kappa$ is in the special case of the Green's kernel, i.e., a  translation-invariant kernel $\kappa[i,j]=\kappa[i-j]$. Accordingly, $\kappa$ can be reduced: $\kappa: [n] \xrightarrow{} \mathbb{R}^{d\times d}$ where we can parameterize $\mathcal{F}(\kappa)$ with a complex-valued matrix $\mathbb{C}^{n\times d\times d}$.
%From Definition \ref{def:fgso}, we know that the kernel $\kappa$ is defined as a matrix-valued projection, i.e., $\kappa: [n]\times [n] \xrightarrow{} \mathbb{R}^{d\times d}$. Accordingly, we can parameterize $\mathcal{F}(\kappa)$ with a complex-valued matrix $\mathbb{C}^{n\times n\times d\times d}$. Note that we assume $\kappa$ is in the special case of the Green's kernel, i.e., $\kappa[i,j]=\kappa[i-j]$, indicating that $\kappa$ is shift-invariant either dimension of discrete 

\textbf{What is $n$-invariant FGO?} Turning to our case of the fully-connected hypervariate graph, we can consider a special case of $\kappa$, i.e., a space-invariant kernel $\kappa[i,j]=\kappa[\varrho]$ with $\varrho$ being a constant scalar. Accordingly, we can parameterize FGO $\mathcal{S}$ with a $n$-invariant complex-valued matrix $\mathbb{C}^{d \times d}$.

\textbf{The interpretation of $n$-invariant FGO.} An $n$-invariant FGO is similar to a shared-weight convolution kernel or filter of CNNs that slide along ($[n]\times [n]$) input features, which effectively reduces parameter volumes and saves computation costs. Note that although we adopt the same transformation (i.e., the $n$-invariant FGO) over $NT$ frequency points, we embed the raw MTS inputs in the $d$-dimension distributive space beforehand and then perform FourierGNN over MTS embeddings, which can be analogized as $d$ convolution kernels/filters in each convolutional layer in CNNs. This can ensure FourierGNN is able to learn informative features/patterns to improve its model capacity (See the following analysis of the effectiveness of $n$-invariant FGO).

% the model learning capability of FourierGNN with the $n$-invariant FGOs.

\textbf{The effectiveness of $n$-invariant FGO.} In addition, the $n$-invariant parameterized FGO is empirically proven effective to improve model generalization and achieve superior forecasting performance (See the ablation study in Section \ref{sec:analysis} for more details). Although parameterizing $\mathcal{F}(\kappa)\in\mathbb{C}^{n\times d \times d}$ (i.e., an $n$-variant FGO) may be more powerful and flexible than the $n$-invariant FGO in terms of forecasting performance, it introduces much more parameters and training time costs, especially in case of multi-layer FourierGNN, and may obtain inferior performance due to inadequate training or overfitting. As indicated in Table \ref{tab:n_invariant}, the FourierGNN with the $n$-invariant FGO achieves slightly better performance than that with the $n$-variant FGO on ECG and COVID-19, respectively. Notably, the FourierGNN with the $n$-variant FGO introduces a much larger parameter volume proportional to $n$ and requires significantly more training time. In contrast, $n$-invariant FGO is $n$-agnostic and lightweight, which is a more wise and efficient alternative. These results confirm our design and verify the effectiveness and applicability of $n$-invariant FGO.

%In the time domain, we embed the raw MTS inputs to improve the model learning capability, while we learn the same transformation (FGO) for all $NT$ frequency points in Fourier space (similar to CNN with shared-weight convolution kernels or filters that slide along input features). 
%It may not obtain inferior performance due to inadequate training or overfitting. As shown in Table \ref{tab:n_invariant}, ..... 

\begin{table}[!h]\small
    \centering
    \caption{Comparison between FourierGNN models with $n$-invariant FGO and $n$-variant FGO on the ECG and COVID-19 datasets.}
    \resizebox{\textwidth}{14mm}{
    \begin{tabular}{c | c c c c c c}
    \toprule
    Datasets &  Models   &  Parameters (M) & Training (s/epoch) & MAE & RMSE & MAPE (\%)\\
         \midrule
    \multirow{2}*{\rotatebox{0}{ECG}}  & $n$-invariant &  \textbf{0.18} & \textbf{12.45} & \textbf{0.052} & 0.078 & \textbf{10.97}  \\
      & $n$-variant & 82.96 & 104.06 & 0.053 & 0.078 & 11.05 \\
      \midrule
    \multirow{2}*{\rotatebox{0}{COVID-19}}  & $n$-invariant &  \textbf{1.06} & \textbf{0.62} & \textbf{0.123} & \textbf{0.168} & \textbf{71.52}  \\
      & $n$-variant & 130.99 & 7.46 & 0.129 & 0.174 & 72.12 \\
      \bottomrule
    \end{tabular}}
    \label{tab:n_invariant}
\end{table}

\subsection{Proof of Proposition \ref{pro:filter} and Interpretation of FourierGNN} 
\label{sec:filter}

\begin{appendixPro}%[\textbf{Equivalence}]
Given a graph $G=(X, A)$ with node features $X \in \mathbb{R}^{n \times d}$ and adjacency matrix $A\in \mathbb{R}^{n \times n}$, the recursive multiplication of FGOs in Fourier space is equivalent to multi-order convolutions in the time domain:
\begin{equation}
    \mathcal{F}^{-1}({\mathcal{F}(X)\mathcal{S}_{0:k}})= A_{k:0}{X}W_{0:k}, \quad \mathcal{S}_{0:k}=\prod_{i=0}^k\mathcal{S}_i,  A_{k:0}=\prod_{i=k}^0 {A}_i,  W_{0:k}=\prod_{i=0}^k{W}_i\nonumber
\end{equation}
where $A_0, \mathcal{S}_0, W_0$ are the identity matrix, $A_k\in\mathbb{R}^{n\times n}$ corresponds to the $k$-th diffusion step sharing the same sparsity pattern of $A$, $W_{k} \in \mathbb{R}^{d \times d}$ is the $k$-th weight matrix, $\mathcal{S}_k\in\mathbb{C}^{d\times d}$ is the $k$-th FGO satisfying $\mathcal{F}(A_k {X} W_k)=\mathcal{F}({X})\mathcal{S}_k$, and $\mathcal{F}$ and $\mathcal{F}^{-1}$ denote DFT and its inverse, respectively.
\end{appendixPro}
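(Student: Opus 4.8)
The plan is to prove the identity by induction on the number of layers $k$, using as the single engine the FGO defining property established in Equation (\ref{equ:gcn_unit}), namely that for the $k$-th operator one has $\mathcal{F}(Y)\mathcal{S}_k = \mathcal{F}(A_k Y W_k)$. The crucial preliminary observation is that this property is \emph{linear} in its argument and therefore holds not only for the original node-feature matrix $X$ but for \emph{any} matrix $Y \in \mathbb{R}^{n \times d}$: the derivation in Equations (\ref{equ:fgso})--(\ref{equ:gcn_unit}) only uses the convolution theorem together with the factorization $\sum_j Y[j]\kappa[i,j] = A_k Y W_k$, which is valid for any $Y$. I would isolate this as a working lemma before starting the induction, since the whole argument hinges on being able to reapply the FGO identity to intermediate time-domain quantities rather than to $X$ alone.

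For the base case $k=0$, since $A_0, \mathcal{S}_0, W_0$ are all the identity, the left side $\mathcal{F}^{-1}(\mathcal{F}(X)\mathcal{S}_0)$ collapses to $X = A_0 X W_0$, so the claim holds. For the inductive step I would assume $\mathcal{F}^{-1}(\mathcal{F}(X)\mathcal{S}_{0:k}) = A_{k:0} X W_{0:k}$ and set $Z := A_{k:0} X W_{0:k}$, so that by the inductive hypothesis $\mathcal{F}(X)\mathcal{S}_{0:k} = \mathcal{F}(Z)$. Using associativity of the operator product, $\mathcal{S}_{0:k+1} = \mathcal{S}_{0:k}\,\mathcal{S}_{k+1}$, I would write $\mathcal{F}(X)\mathcal{S}_{0:k+1} = \mathcal{F}(Z)\mathcal{S}_{k+1}$ and then invoke the working lemma with $Y = Z$ to obtain $\mathcal{F}(Z)\mathcal{S}_{k+1} = \mathcal{F}(A_{k+1} Z W_{k+1})$. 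Applying $\mathcal{F}^{-1}$ and substituting $Z$ yields $A_{k+1}(A_{k:0} X W_{0:k})W_{k+1}$.

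The final step is purely bookkeeping of the noncommutative products: the new factor $A_{k+1}$ lands on the \emph{left} of $A_{k:0} = A_k \cdots A_0$, producing $A_{k+1:0}$, while $W_{k+1}$ lands on the \emph{right} of $W_{0:k} = W_0 \cdots W_k$, producing $W_{0:k+1}$, which is exactly the $(k+1)$-case. I would remark that this left/right split is precisely what forces the asymmetric index conventions in the statement: the adjacency factors accumulate in descending order because each graph shift multiplies from the left, whereas the weight factors accumulate in ascending order because each channel mixing multiplies from the right.

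The main obstacle, such as it is, is conceptual rather than computational: one must be careful that the FGO identity $\mathcal{F}(Y)\mathcal{S}_k = \mathcal{F}(A_k Y W_k)$ is genuinely argument-agnostic, so that it may be applied to the intermediate representation $Z$ rather than only to $X$; once that is granted the induction is routine. A secondary point to verify cleanly is that the products $\mathcal{S}_{0:k}$, $A_{k:0}$, and $W_{0:k}$ are consistently indexed so that the associativity regrouping in the inductive step matches the definitions, which is where an ordering slip would most plausibly creep in.
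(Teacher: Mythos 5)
Your proof is correct and is essentially the paper's own argument: the paper unrolls $\mathcal{F}(A_K\cdots A_0 X W_0\cdots W_K)$ top-down by repeatedly extracting $\mathcal{S}_k$ factors via the identity $\mathcal{F}(A_k Y W_k)=\mathcal{F}(Y)\mathcal{S}_k$ applied to intermediate products, which is exactly your induction run in the reverse direction. Your explicit isolation of the argument-agnostic ``working lemma'' is a point the paper uses only implicitly, but it does not constitute a different route.
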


\begin{proof}
The proof aims to demonstrate the equivalence between the recursive multiplication of FGOs in Fourier space and multi-order convolutions in the time domain. According to $\mathcal{F}(A_k {X} W_k)=\mathcal{F}({X})\mathcal{S}_k$, we expand the multi-order convolutions $A_{0:K}XW_{0:K}$ in the time domain using a set of FGOs in Fourier space:

%According to the concise form of Equation \ref{equ:gcn_unit}, i.e.,
% \begin{equation}
%     \mathcal{F}(AXW)=\mathcal{F}(X) \mathcal{S}
% \end{equation}
% where $\mathcal{F}$ denotes the Discrete Fourier Transform (DFT), 
%we omit the weight matrix $W$ for convenience (we can treat $W$ as an identity matrix), 
% we expand the multi-order convolutions $A_{0:K}XW_{0:K}$ in the time domain using a set of FGOs in Fourier space,
% it yields:
\begin{equation}
    \begin{aligned}
        \mathcal{F}(A_KA_{K-1}\cdots A_0XW_0\cdots W_{K-1} W_K)&=\mathcal{F}(A_K(A_{K-1}...A_0XW_0\cdots W_{K-1})W_K)\\
        &=\mathcal{F}(A_{K-1}...A_0XW_0\cdots W_{K-1}) \mathcal{S}_K\\
        &=\mathcal{F}(A_{K-1}(A_{K-2}...A_0XW_0\cdots W_{K-2})W_{K-1}) \mathcal{S}_K\\
        &= \mathcal{F}(A_{K-2}...A_0XW_0\cdots W_{K-2}) \mathcal{S}_{K-1}\mathcal{S}_K\\
        &= \cdots \\
        &=\mathcal{F}({X})  \mathcal{S}_0\cdots\mathcal{S}_{K-1} \mathcal{S}_{K}\\
        &=\mathcal{F}({X})\mathcal{S}_{0:K}
    \end{aligned}
\end{equation}
%where $\{A_i\}_{i=0}^K$ is a set of GSOs, and $\{\mathcal{A}_i\}_{i=0}^K$ is a set of FGOs corresponding to $\{A_i\}_{i=0}^K$ individually.
%and $S_2$ are GSOs and $\mathbf{S}_1$ and $\mathbf{S}_2$ are FGOs. Extending the above equation to a high-order case, we can easily obtain:
% \begin{equation}
%     \mathcal{F}(S_KS_{K-1}...S_0X)=\mathcal{F}(\mathcal{X})\mathcal{S}_0\cdots\mathcal{S}_{K-1}\mathcal{S}_{K}
% \end{equation} \prod_{k=0}^KW_k
where it yields $\mathcal{F}^{-1}({\mathcal{F}(X)\mathcal{S}_{0:K}})= A_{K:0}{X}W_{0:K}$ with $\mathcal{S}_{0:K}=\prod_{i=0}^K\mathcal{S}_i$,$A_{K:0}=\prod_{i=K}^0 {A}_i$ and $W_{0:K}=\prod_{i=0}^K{W}_i$. Proved.
\end{proof}

Thus, the FourierGNN can be rewritten as (for convenience, we exclude the non-linear activation function $\sigma$ and learnable bias parameters $b$):
\begin{equation}
\begin{aligned}
    \mathcal{F}^{-1}(\sum_{k=0}^{K}\mathcal{F}(X)\mathcal{S}_{0:K})
    %&=\mathcal{F}(A_0XW_0+A_1(A_0XW_0)W_1+...+A_KA_{K-1}...(A_0XW_0)...W_{K-1}W_K)\\
    =A_0XW_0+A_1(A_0XW_0)W_1+...+A_{K:0}{X}W_{0:K}
    % &+A_KA_{K-1}...(A_0XW_0)...W_{K-1}W_K
    %\\&=\sum_{k=0}^{K}\mathcal{F}(X)\mathcal{S}_{0:k} \quad s.t.\  \mathcal{S}_{0:k}=\prod_{i=0}^k\mathcal{S}_i
\end{aligned}
\end{equation}
From the right part of the above equation, we can observe that it assigns different weights to weigh the information of different neighbors in each diffusion order. This property enable FourierGNN to capture the complex correlations (i.e., spatiotemporal dependencies) in the hypervariate graph, which is empirically verified in our visualization experiments.

\section{Compared with Other Graph Neural Networks} 
\label{compares}
% \subsection{GNN}
\textbf{Graph Convolutional Networks}. Graph convolutional networks (GCNs) depend on the Laplacian eigenbasis to perform the multi-order graph convolutions over a given graph structure. Compared with GCNs, FourierGNN as an efficient alternative to multi-order graph convolutions has three main differences: 1) No eigendecompositions or similar costly matrix operations are required. FourierGNN transforms the input into Fourier domain by discrete Fourier transform (DFT) instead of graph Fourier transform (GFT); 2) Explicitly assigning various importance to nodes of the same neighborhood with different diffusion steps. FourierGNN adopts different Fourier Graph Operators $\mathcal{S}$ in different diffusion steps corresponding to different dependencies among nodes; 3) FourierGNN is invariant to the discretization $N$, $T$. It parameterizes the graph convolution via Fourier Graph Operators which are invariant to the graph structure and graph scale.%; 4) FourierGNN does not require a predefined graph structure.% and has been empirically demonstrated with ability to transfer solution between different meshes by FNO~\cite{LiKALBSA2021}
%That is to say, $AX=U\Lambda U^TX$ in GCNs where $U$ is the matrix of eigenvectors of $A$ while $AX=\mathcal{F}^{-1}(\mathcal{S}\mathcal{X})$ in our model where $\mathcal{S}$ is the Fourier graph shift operator.

\textbf{Graph Attention Networks}. Graph attention networks (GATs) are non-spectral attention-based graph neural networks. GATs use node representations to calculate the attention weights (i.e., edge weights) varying with different graph attention layers. Accordingly, both GATs and FourierGNN do not depend on eigendecompositions and adopt varying edge weights with different diffusion steps (layers). However, FourierGNN can efficiently perform graph convolutions in Fourier space. For a complete graph, the time complexity of the attention calculation of $K$ layers is proportional to $Kn^2$ where $n$ is the number of nodes, while a $K$-layer FourierGNN infers the graph structure in Fourier space with the time complexity proportional to $n\operatorname{log}n$. In addition, compared with GATs that implicitly achieve edge-varying weights with different layers, FourierGNN adopts different FGOs in different diffusion steps explicitly.

\section{Experiment Details} \label{Experiment_detail}
\subsection{Datasets}\label{appendix_datasets}
We use seven public multivariate benchmarks for multivariate time series forecasting and these benchmark datasets are summarized in Table \ref{tab:datasets}.

\begin{table}[ht]\small
    \centering
    \caption{Summary of datasets.}
    % \resizebox{\textwidth}{12mm}{
    \begin{tabular}{c | c c c c c c c}
    \toprule
    Datasets & Solar & Wiki & Traffic & ECG & Electricity & COVID-19 & METR-LA\\
    \midrule
      Samples & 3650 & 803& 10560& 5000 & 140211 & 335 & 34272\\
      Variables & 592 & 2000& 963& 140 & 370 & 55 & 207 \\
      Granularity & 1hour & 1day& 1hour& - & 15min& 1day & 5min\\
      Start time & 01/01/2006 & 01/07/2015&01/01/2015 & - &01/01/2011 & 01/02/2020 & 01/03/2012\\
    \bottomrule
    \end{tabular}
    % }
    \label{tab:datasets}
\end{table}

\textbf{Solar}\footnote{\url{https://www.nrel.gov/grid/solar-power-data.html}}: This dataset is about solar power collected by National Renewable Energy Laboratory. We choose the power plant data points in Florida as the data set which contains 593 points. The data is collected from 2006/01/01 to 2016/12/31 with the sampling interval of every 1 hour.%synthetic solar photovoltaic (PV) power plant data points

\textbf{Wiki~\cite{Sen2019}}: This dataset contains a number of daily views of different Wikipedia articles and is collected from 2015/7/1 to 2016/12/31. It consists of approximately $145k$ time series and we randomly choose $2k$ from them as our experimental data set.

\textbf{Traffic~\cite{Sen2019}}: This dataset contains hourly traffic data from $963$ San Francisco freeway car lanes. The traffic data are collected since 2015/01/01 with the sampling interval of every 1 hour.

\textbf{ECG\footnote{\url{http://www.timeseriesclassification.com/description.php?Dataset=ECG5000}}}: This dataset is about Electrocardiogram(ECG) from the UCR time-series classification
archive \cite{dau2018}. It contains 140 nodes and each node has a length of 5000.

\textbf{Electricity\footnote{\url{https://archive.ics.uci.edu/ml/datasets/ElectricityLoadDiagrams20112014}}}: This dataset contains the electricity consumption of 370 clients and is collected since 2011/01/01. The data sampling interval is every 15 minutes.

\textbf{COVID-19\footnote{\url{https://github.com/CSSEGISandData/COVID-19}}}: This dataset is about COVID-19 hospitalization in the U.S. states of California (CA) from 01/02/2020 to 31/12/2020 provided by the Johns Hopkins University with the sampling interval of every one day.

\textbf{METR-LA\footnote{\url{https://github.com/liyaguang/DCRNN}}}: This dataset contains traffic information collected from loop detectors in the highway of Los Angeles County from 01/03/2012 to 30/06/2012. It contains 207 sensors and the data sampling interval is every 5 minutes.

\subsection{Baselines}
\label{subsec:baselines}
In experiments, we conduct a comprehensive comparison of the forecasting performance between our FourierGNN and representative and state-of-the-art (SOTA) models as follows.

\textbf{VAR} \cite{Vector1993}: VAR is a classic linear autoregressive model. We use the Statsmodels library (\url{https://www.statsmodels.org}) which is a Python package that provides statistical computations to realize the VAR. 

\textbf{DeepGLO} \cite{Sen2019}: DeepGLO models the relationships among variables by matrix factorization and employs a temporal convolution neural network to introduce non-linear relationships. We download the source code from: \url{https://github.com/rajatsen91/deepglo}. We follow the recommended configuration as our experimental settings for wiki, electricity, and traffic datasets. For covid datasets, the vertical and horizontal batch size is set to 64, the rank of the global model is set to 64, the number of channels is set to [32, 32, 32, 1], and the period is set to 7.

\textbf{LSTNet} \cite{Lai2018}: LSTNet uses a CNN to capture inter-variable relationships and an RNN to discover long-term patterns. We download the source code from: \url{https://github.com/laiguokun/LSTNet}. In our experiment, we use the recommended configuration where the number of CNN hidden units is 100, the kernel size of the CNN layers is 4, the dropout is 0.2, the RNN hidden units is 100, the number of RNN hidden layers is 1, the learning rate is 0.001 and the optimizer is Adam.

\textbf{TCN} \cite{bai2018}: TCN is a causal convolution model for regression prediction. We download the source code from: \url{https://github.com/locuslab/TCN}. We utilize the same configuration as the polyphonic music task exampled in the open source code where the dropout is 0.25, the kernel size is 5, the number of hidden units is 150, the number of levels is 4 and the optimizer is Adam.

\textbf{Reformer} \cite{reformer20}: Reformer combines the modeling capacity of a Transformer with an architecture that can be executed
efficiently on long sequences and with small memory use. We download the source code from: \url{https://github.com/thuml/Autoformer}. We follow the recommended configuration as the experimental settings.

\textbf{Informer} \cite{Zhou2021}: Informer leverages an efficient self-attention mechanism to encode the dependencies among variables. We download the source code from: \url{https://github.com/zhouhaoyi/Informer2020}. We follow the recommended configuration as our experimental settings where the dropout is 0.05, the number of encoder layers is 2, the number of decoder layers is 1, the learning rate is 0.0001, and the optimizer is Adam.

\textbf{Autoformer} \cite{autoformer21}: Autoformer proposes a decomposition architecture by embedding the series decomposition block as an inner operator, which can progressively aggregate the long-term trend part from intermediate prediction. We download the source code from: \url{https://github.com/thuml/Autoformer}. We follow the recommended configuration as our experimental settings with 2 encoder layers and 1 decoder layer.

\textbf{FEDformer} \cite{fedformer22}: FEDformer proposes an attention mechanism with low-rank approximation in frequency and a mixture of expert decomposition to control the distribution shifting. We download the source code from: \url{https://github.com/MAZiqing/FEDformer}. We use FEB-f as the Frequency Enhanced Block and select the random mode with 64 as the experimental mode.

\textbf{SFM} \cite{ZhangAQ17}: On the basis of the LSTM model, SFM introduces a series of different frequency components in the cell states. We download the source code from: \url{https://github.com/z331565360/State-Frequency-Memory-stock-prediction}. We follow the recommended settings where the learning rate is 0.01, the frequency dimension is 10, the hidden dimension is 10 and the optimizer is RMSProp.

\textbf{StemGNN} \cite{Cao2020}: StemGNN leverages GFT and DFT to capture dependencies among variables in the frequency domain. We download the source code from: \url{https://github.com/microsoft/StemGNN}. We use the recommended configuration of stemGNN as our experiment setting where the optimizer is RMSProp, the learning rate is 0.0001, the number of stacked layers is 5, and the dropout rate is 0.5.

\textbf{MTGNN} \cite{wu2020connecting}: MTGNN proposes an effective method to exploit the inherent dependency relationships among multiple time series. We download the source code from: \url{https://github.com/nnzhan/MTGNN}. Because the experimental datasets have no static features, we set the parameter load\_static\_feature to false. We construct the graph by the adaptive adjacency matrix and add the graph convolution layer. Regarding other parameters, we adopt the recommended settings.

\textbf{GraphWaveNet} \cite{zonghanwu2019}: GraphWaveNet introduces an adaptive dependency matrix learning to capture the hidden spatial dependency. We download the source code from: \url{https://github.com/nnzhan/Graph-WaveNet}. Since our datasets have no prior defined graph structures, we use only adaptive adjacent matrix. We add a graph convolution layer and randomly initialize the adjacent matrix. We adopt the recommended configuration as our experimental settings where the learning rate is 0.001, the dropout is 0.3, the number of epochs is 50, and the optimizer is Adam.

\textbf{AGCRN} \cite{Bai2020nips}: AGCRN proposes a data-adaptive graph generation module for discovering spatial correlations from data. We download the source code from: \url{https://github.com/LeiBAI/AGCRN}. We follow the recommended configuration as our experimental settings where the embedding dimension is 10, the learning rate is 0.003, and the optimizer is Adam.

\textbf{TAMP-S2GCNets} \cite{tampsgcnets2022}: TAMP-S2GCNets explores the utility of MP to enhance knowledge representation mechanisms within the time-aware DL paradigm. We download the source code from: \url{https://www.dropbox.com/sh/n0ajd5l0tdeyb80/AABGn-ejfV1YtRwjf_L0AOsNa?dl=0}. TAMP-S2GCNets requires predefined graph topology and we use the California State topology provided by the source code as input. We adopt the recommended configuration as our experimental settings on COVID-19.

\textbf{DCRNN} \cite{LiYS018}: DCRNN uses bidirectional graph random walk to model spatial dependency and recurrent neural network to capture the temporal dynamics. We download the source code from: \url{https://github.com/liyaguang/DCRNN}. We follow the recommended configuration as our experimental settings with the batch size is 64, the learning rate is $0.01$, the input dimension is 2 and the optimizer is Adam. DCRNN requires a pre-defined graph structure and we use the adjacency matrix as the pre-defined structure provided by the METR-LA dataset.

\textbf{STGCN} \cite{Yu2018}: STGCN integrates graph convolution and gated temporal convolution through spatial-temporal convolutional blocks. We download the source code from:\url{https://github.com/VeritasYin/STGCN_IJCAI-18}. We use the recommended configuration as our experimental settings where the batch size is 50, the learning rate is $0.001$ and the optimizer is Adam. STGCN requires a pre-defined graph structure and we leverage the adjacency matrix as the pre-defined structures provided by the METR-LA dataset.

\textbf{CoST} \cite{cost22}: CoST separates the representation learning and downstream forecasting task and proposes a contrastive learning framework that learns disentangled season-trend representations for time series forecasting tasks. We download the source code from: \url{https://github.com/salesforce/CoST}. We set the representation dimension to 320, the learning rate to 0.001, and the batch size to 32. Inputs are min-max normalization, we perform a 70/20/10 train/validation/test split for the METR-LA dataset and 60/20/20 for the COVID-19 dataset.

\subsection{Evaluation Metrics} \label{appendix_metrics}
We use MAE (Mean Absolute Error), RMSE (Root Mean Square Error), and MAPE (Mean Absolute Percentage Error) as the evaluation metrics in the experiments.

Specifically, given the groudtruth at timestamps $t$, $Y_t=[{\bm{x}}_{t+1}, ..., {\bm{x}}_{t+\tau}]\in\mathbb{R}^{N\times\tau}$, and the predictions $\hat{Y}_t=[{\hat{\bm{x}}}_{t+1}, ..., \hat{{\bm{x}}}_{t+\tau}]\in\mathbb{R}^{N\times\tau}$ for future $\tau$ steps at timestamp $t$, the metrics are defined as follows:
\begin{equation}
    MAE=\frac{1}{\tau N}\sum_{i=1}^{N}\sum_{j=1}^{\tau}\left | x_{ij}-\hat{x}_{ij} \right |
\end{equation}
\begin{equation}
   RMSE=\sqrt{\frac{1}{\tau N}\sum_{i=1}^{N}\sum_{j=1}^{\tau}\left (x_{ij}-\hat{x}_{ij}\right )^2}
\end{equation}
\begin{equation}
    MAPE=\frac{1}{\tau N}\sum_{i=1}^{N}\sum_{j=1}^{\tau}\left | \frac{x_{ij}-\hat{x}_{ij}}{x_{ij}} \right | \times 100\%
\end{equation}
with $x_{ij}\in Y_t$ and $\hat{x}_{ij}\in\hat{Y}_t$.
% \begin{equation}
%     MAE=\frac{1}{T}\sum_{t=1}^{T}\left | X_t-\hat{X_t} \right |,
% \end{equation}
% \begin{equation}
%   RMSE=\sqrt{\frac{1}{T}\sum_{t=1}^{T}\left ( X_t-\hat{X_t} \right )^2},
% \end{equation}
% \begin{equation}
%     MAPE=\frac{1}{T}\sum_{t=1}^{T}\left | \frac{X_t-\hat{X_t}}{X_t} \right | \times 100\%.
% \end{equation}

\subsection{Experimental Settings}
\label{subsec:es}
% tuning process
% Network parameter of FFN (figure)
% The settings for baselines.
%Our experimental hardware environment is with one NVIDIA RTX 3080 GPU card. Our code is implemented in Python with PyTorch 1.8. We separate the COVID-19 dataset with a ratio of 6:2:2 and for other datasets the ratio is 7:2:1. For all datasets, the optimizer is RMSProp, the loss function is MSELoss, the learning rate is initialized as $0.00001$, the maximum of training epochs is set to 100 and the activation function of FourierGNN is $ReLU$. 
We summarize the implementation details of the proposed FourierGNN as follows. Note that the details of the baselines are introduced in their corresponding descriptions (see Section \ref{subsec:baselines}).

\textbf{Network details.} The fully connected feed-forward network (FFN) consists of three linear transformations with $LeakyReLU$ activations in between. The FFN is formulated as follows:% $\mathcal{X}_\Psi\in\mathbb{R}^{}$
\begin{equation}
    \begin{aligned}
        \mathbf{X}_1&=\operatorname{LeakyReLU}(\mathbf{Y}_t^{\mathcal{G}}\mathbf{W}_1+\mathbf{b}_1)\\
        \mathbf{X}_2&=\operatorname{LeakyReLU}(\mathbf{X}_1\mathbf{W}_2+\mathbf{b}_2)\\
        \hat{Y}&=\mathbf{X}_2\mathbf{W}_3+\mathbf{b}_3
    \end{aligned}
\end{equation}
where $\mathbf{W}_1\in\mathbb{R}^{(Td)\times d^{ffn}_1}$, $\mathbf{W}_2\in\mathbb{R}^{d^{ffn}_1\times {d^{ffn}_2}}$ and $\mathbf{W}_3\in\mathbb{R}^{d^{ffn}_2\times \tau}$ are the weights of the three layers respectively, and $\mathbf{b}_1\in\mathbb{R}^{d^{ffn}_1}$, $\mathbf{b}_2\in\mathbb{R}^{d^{ffn}_2}$ and $\mathbf{b}_3\in\mathbb{R}^{\tau}$ are the biases of the three layers respectively. Here, $d^{ffn}_1$ and $d^{ffn}_2$ are the dimensions of the three layers. In addition, we adopt a $ReLU$ activation function in Equation \ref{equ:fourier_gnn}.

\textbf{Training details.} We carefully tune the hyperparameters, including the embedding size, batch size, $d^{ffn}_1$ and $d^{ffn}_2$, on the validation set and choose the settings with the best performance for FourierGNN on different datasets. Specifically, the embedding size and batch size are tuned over $\{32,64,128,256,512\}$ and $\{2,4,8,16,32,64,128\}$ respectively. For the COVID-19 dataset, the embedding size is 256, and the batch size is set to 4. For the Traffic, Solar, and Wiki datasets, the embedding size is 128, and the batch size is set to 2. For the METR-LA, ECG, and Electricity datasets, the embedding size is 128, and the batch size is set to 32. 
%Note that the hypervariate graph connecting all nodes is a fully-connected graph, indicating that any spatial order is feasible. Therefore, although we perform DFT in FourierGNN, none spatial order in the spatial space is necessary for performing FourierGNN, and we directly adopt the raw dataset for experiments.

To reduce the number of parameters, we adopt a linear transform to reshape the original time domain representation $\mathbf{Y}_t^{\mathcal{G}}  \in\mathbb{R}^{NT\times d}$ to $\mathbf{Y}_t \in\mathbb{R}^{N\times T\times d}$, and map  $\mathbf{Y}_t$ to a low-dimensional tensor $\mathbf{Y}_t \in\mathbb{R}^{N\times l\times d}$ with $l<T$. We then reshape $\mathbf{Y}_t \in\mathbb{R}^{N\times (ld)}$ and feed it to FFN. We perform a grid search on the dimensions of FFN, i.e., $d^{ffn}_1$ and $d^{ffn}_2$, over $\{32, 64, 128, 256, 512\}$ and tune the intermediate dimension $l$ over $\{2,4,6,8,12\}$. The settings of the three hyperparameters over all datasets are shown in Table \ref{tab:settings}. 
By default, we set the diffusion step (layers) $K=3$ for all datasets.

\begin{table}[ht]
    \small
    \centering
    \caption{Dimension settings of FFN on different datasets. $*$ denotes that we feed the original time domain representation to FFN without the dimension reduction.}
    \renewcommand\arraystretch{1.5}
    \begin{tabular}{c | c c c c c c c}
    \toprule
    Datasets & Solar & Wiki & Traffic & ECG & Electricity & COVID-19 & META-LR\\
    \midrule
      $l$ & 6 & 2& 2& $*$ & 4 & 8 & 4\\
      $d^{ffn}_1$ & 64 & 64& 64& 64 & 64 & 256 & 64 \\
      $d^{ffn}_2$ & 256 & 256& 256& 256 & 256& 512 & 256\\
    \bottomrule
    \end{tabular}
    % \begin{threeparttable}
    % % \resizebox{\textwidth}{12mm}
    % {
    % }
    % \footnotesize
    % \end{threeparttable}
    \centering
    \label{tab:settings}
\end{table}

\subsection{Details for Visualization Experiments}
\label{sec:visual_details}
To verify the effectiveness of FourierGNN in learning the spatiotemporal dependencies on the hypervariate graph, we obtain the output of FourierGNN as the node representation, denoted as $\mathbf{Y}_t^{\mathcal{G}}=\operatorname{IDFT}(\operatorname{FourierGNN}(\mathbf{X}_t^{\mathcal{G}}))\in\mathbb{R}^{NT\times d}$ with Inverse Discrete Fourier Transform ($\operatorname{IDFT}$). Then, we visualize the adjacency matrix $\mathbf{A}$ calculated based the flatten node representation $\mathbf{Y}_t^{\mathcal{G}} \in\mathbb{R}^{NT\times d}$, formulated as $\mathbf{A}=\mathbf{Y}_t^{\mathcal{G}}(\mathbf{Y}_t^{\mathcal{G}})^T\in\mathbb{R}^{NT\times NT}$, to show the variable correlations. Note that $\mathbf{A}$ is normalized via $\mathbf{A}/\operatorname{max}(\mathbf{A})$. Since it is not feasible to directly visualize the huge adjacency matrix $\mathbf{A}$ of the hypervariate graph, we visualize its different subgraphs in Figures \ref{fig:TIME_ADJ}, \ref{fig:visual_metr}, \ref{fig:filters}, and \ref{fig:space_adj} to better verify the learned spatiotemporal information on the hypervariate graph from different perspectives.

Figure \ref{fig:TIME_ADJ}. We select $8$ counties and visualize the correlations between $12$ consecutive time steps for each selected county respectively. Figure \ref{fig:TIME_ADJ} reflects the temporal correlations within each variable.

Figure \ref{fig:visual_metr}: On the METR-LA dataset, we average its adjacency matrix $\mathbf{A}$ over the temporal dimension (i.e., marginalizing $T$) to $\mathbf{A}'\in\mathbb{R}^{N\times N}$. Then, we randomly select $20$ detectors out of all $N=207$ detectors and obtain their corresponding sub adjacency matrix ($\mathbb{R}^{20\times 20}$) from $\mathbf{A}'$ for visualization. We further compare the sub-adjacency with the real road map (generated by the Google map tool) to verify the learned dependencies between different detectors.

Figure \ref{fig:filters}. Since we adopt a $3$-layer FourierGNN, we can calculate four adjacency matrices based on the spectrum input $\mathcal{X}_t^{\mathcal{G}}$ of FourierGNN and the outputs of each layer in FourierGNN. Following the way of visualization in Figure \ref{fig:visual_metr}, we select $10$ counties and two timestamps on the four adjacency matrices for visualization. Figure \ref{fig:filters} shows the effects of each layer of FourierGNN in filtering or enhancing variable correlations.

Figure \ref{fig:space_adj}. On the COVID-19 dataset, we randomly select $10$ counties out of $N=55$ counties and obtain their four sub-adjacency matrices of four consecutive days for visualization. Each of the four sub adjacency matrices $\mathbb{R}^{10\times 10}$ embodies the dependencies between counties in one day. Figure \ref{fig:space_adj} reflects the time-varying dependencies between counties (i.e., variables).

\section{Additional Results} \label{more_results}
To further evaluate the performance of our model FourierGNN in multi-step forecasting, we conduct more experiments on the Wiki, METR-LA, and ECG datasets, respectively. We compare our model FourierGNN with five models (including StemGNN~\cite{Cao2020}, AGCRN~\cite{Bai2020nips}, GraphWaveNet~\cite{zonghanwu2019}, MTGNN~\cite{wu2020connecting}, and Informer~\cite{Zhou2021}) on the Wiki dataset under different prediction lengths, and the results are shown in Table \ref{tab:wiki_result}. From the table, we observe that FourierGNN outperforms other models on MAE, RMSE, and MAPE metrics for all the prediction lengths. On average, FourierGNN improves MAE, RMSE, and MAPE by 7.4\%, 3.5\%, and 22.3\%, respectively. Among these models, AGCRN shows promising performances since it captures the spatial and temporal correlations adaptively. However, it fails to simultaneously capture spatiotemporal dependencies, limiting its forecasting performance. In contrast, our model captures comprehensive spatiotemporal dependencies simultaneously on a hypervariate graph for multivariate time series forecasting. 
%We compare FourierGNN with other GNN-based MTS models (including StemGNN, AGCRN, GraphWaveNet, MTGNN and TAMP-S2GCNets) and representation learning model (CoST) on the COVID-19 dataset under different prediction lengths, and the results are shown in Table \ref{tab:covid_result}. From the table, we can find that FourierGNN achieves the best MAE and RMSE on all the prediction lengths. On average, FourierGNN has 30.0\% and 27.9\% improvement on MAE and RMSE respectively over the best baseline, i.e., TAMP-S2GCNets. Among these models, TAMP-S2GCNets requiring a pre-defined graph topology achieves competitive performance since it enhances the resultant graph learning mechanisms with a multi-persistence. However, it constructs the graph in the spatial dimension, while our model FourierGNN adaptively learns a supra-graph connecting any two variables at any two timestamps, which is effective and more powerful to capture high-resolution spatial-temporal dependencies.

\begin{table*}[ht]
    \centering
    \caption{Accuracy comparison under different prediction lengths on the Wiki dataset.}
    \resizebox{\textwidth}{!}{
    \begin{tabular}{l|c c c|c c c|c c c|c c c}
    \toprule
       Length & \multicolumn{3}{c|}{3} &  \multicolumn{3}{c|}{6} & \multicolumn{3}{c|}{9}  & \multicolumn{3}{c}{12}\\
       Metrics & MAE &RMSE& MAPE(\%)&  MAE& RMSE& MAPE(\%)&MAE &RMSE &MAPE(\%)& MAE& RMSE&MAPE(\%)\\
       \midrule
         GraphWaveNet \cite{zonghanwu2019}& 0.061 &0.105 & 138.60 & 0.061 &0.105 &135.32 & 0.061 & 0.105 & 132.52& 0.061 &0.104 &136.12\\
         StemGNN \cite{Cao2020}& 0.157 &0.236 &89.00 & 0.159 &0.233 & 98.01 & 0.232 &0.311 & 142.14& 0.220 &0.306 &125.40\\
         AGCRN \cite{Bai2020nips}& \underline{0.043} & \underline{0.077}  & \underline{73.49} & \underline{0.044} & \underline{0.078} & \underline{80.44} & \underline{0.045} & \underline{0.079} & \underline{81.89} & \underline{0.044} & \underline{0.079} &\underline{78.52}\\
         MTGNN \cite{wu2020connecting}& 0.102 & 0.141 & 123.15 & 0.091 &0.133 & 91.75 & 0.074 &0.120 & 85.44& 0.101 & 0.140 &122.96 \\
         Informer \cite{Zhou2021}& 0.053 &0.089 & 85.31 & 0.054 &0.090 & 84.46 & 0.059 &0.095  & 93.80& 0.059 &0.095 &95.09\\
         \midrule
         \textbf{FourierGNN} & \textbf{0.040} &\textbf{0.075} & \textbf{58.18}& \textbf{0.041} &\textbf{0.075} & \textbf{60.43} & \textbf{0.041} &\textbf{0.076}  & \textbf{60.95}& \textbf{0.041} &\textbf{0.076} & \textbf{64.50}\\
         %Improvement &7.0\% &2.6\% & 20.83\%& 6.8\% &3.8\% & 24.9\%& 8.9\% &3.8\% & 25.6\%& 4.5\% &2.5\% &20.3\%\\ 
         \bottomrule
    \end{tabular}}
    \label{tab:wiki_result}
\end{table*}

% In addition, we compare our model FourierGNN with five neural MTS models (including StemGNN, AGCRN, GraphWaveNet, MTGNN and Informer) on Wiki dataset under different prediction lengths, and the results are shown in Table \ref{tab:wiki_result}. From the table, we observe that FourierGNN outperforms other models on MAE, RMSE and MAPE metrics for all the prediction lengths. On average, FourierGNN improves MAE, RMSE and MAPE by 6.8\%, 3.2\% and 22.9\%, respectively. Among these models, AGCRN shows promising performances since it captures the spatial and temporal correlations adaptively. However, it fails to simultaneously capture spatial-temporal dependencies, limiting its forecasting performance. In contrast, our model learns a supra-graph to capture comprehensive spatial-temporal dependencies simultaneously for multivariate time series forecasting. 

\begin{table*}[ht]
    \centering
    \caption{Accuracy comparison under different prediction lengths on the METR-LA dataset.}
    \resizebox{\textwidth}{!}{
    \begin{tabular}{l|c c c|c c c|c c c|c c c}
    \toprule
       Horizon & \multicolumn{3}{c|}{3} &  \multicolumn{3}{c|}{6} & \multicolumn{3}{c|}{9}  & \multicolumn{3}{c}{12}\\
       Metrics & MAE &RMSE& MAPE(\%)&  MAE& RMSE& MAPE(\%)&MAE &RMSE &MAPE(\%)& MAE& RMSE&MAPE(\%)\\
       \midrule
         DCRNN \cite{LiYS018}& 0.160 & 0.204 & 80.00 & 0.191 & 0.243 & 83.15 & 0.216 & 0.269 & 85.72 & 0.241 & 0.291 & 88.25\\
         STGCN \cite{Yu2018}& 0.058 & 0.133& 59.02 &0.080 &0.177 &60.67 &0.102 & 0.209& 62.08 &0.128 &0.238 &63.81 \\
         GraphWaveNet \cite{zonghanwu2019}& 0.180 &0.366 & 21.90 & 0.184 &0.375 &22.95 & 0.196 & 0.382 & 23.61& 0.202 &0.386 &24.14\\
         MTGNN \cite{wu2020connecting}& 0.135 & 0.294 & \textbf{17.99} & 0.144 &0.307 & \textbf{18.82} & 0.149 &0.328 & \textbf{19.38}& 0.153 & 0.316 &\textbf{19.92} \\
         StemGNN \cite{Cao2020}& \underline{0.052} & \underline{0.115} &86.39 & \underline{0.069} & \underline{0.141} & 87.71 & \underline{0.080} & 0.162 & 89.00& \underline{0.093} & 0.175 &90.25\\
         AGCRN \cite{Bai2020nips}& 0.062 & 0.131  & 24.96 & 0.086 & 0.165 & 27.62 & 0.099 & 0.188 & 29.72 & 0.109 & 0.204 & 31.73\\
         Informer \cite{Zhou2021} & 0.076 &0.141 & 69.96 & 0.088 &0.163 & 70.94 & 0.096 &0.178  & 72.26& 0.100 &0.190 &72.54\\
         %LSTNet &0.016 &0.023 & 2.55&0.051 & 0.103 & 8.25& 0.074 &0.139 & 11.94 &0.083 &0.151 &13.40 \\
         CoST \cite{cost22} &  0.064 &0.118 & 88.44 & 0.077 & \underline{0.141} & 89.63 & 0.088 &\textbf{0.159}  & 90.56& 0.097 & \underline{0.171} &91.42\\
         \midrule
         \textbf{FourierGNN} & \textbf{0.050} &\textbf{0.113} & 86.30& \textbf{0.066} &\textbf{0.140} & 87.97 & \textbf{0.076} &\textbf{0.159}  & 88.99& \textbf{0.084} &\textbf{0.165} & 89.69\\
         %Improvement &3.8\% &1.7\% & -& 4.3\% &0.7\% & -& 5.0\% &- &-& 9.7\% &3.5\% &-\\ 
         \bottomrule
    \end{tabular}}
    \label{tab:metr_result}
\end{table*}

Furthermore, we compare our model FourierGNN with seven MTS models (including STGCN~\cite{Yu2018}, DCRNN~\cite{LiYS018}, StemGNN~\cite{Cao2020}, AGCRN~\cite{Bai2020nips}, GraphWaveNet~\cite{zonghanwu2019}, MTGNN~\cite{wu2020connecting}, Informer~\cite{Zhou2021}, and CoST~\cite{cost22}) on the METR-LA dataset which has a predefined graph topology in the data, and the results are shown in Table \ref{tab:metr_result}. On average, we improve $5.7\%$ on MAE and $1.5\%$ on RMSE. Among these models, StemGNN achieves competitive performance because it combines GFT to capture the spatial dependencies and DFT to capture the temporal dependencies. However, it is also limited to simultaneously capturing spatiotemporal dependencies. CoST learns disentangled seasonal-trend representations for time series forecasting via contrastive learning and obtains competitive results. But, our model still outperforms CoST. Because, compared with CoST, our model not only can learn the dynamic temporal representations, but also capture the discriminative spatial representations. Besides, STGCN and DCRNN require pre-defined graph structures. But StemGNN and our model outperform them for all steps, and AGCRN outperforms them when the prediction lengths are 9 and 12. This also shows that a novel adaptive graph learning can precisely capture the hidden spatial dependency. 
In addition, we compare FourierGNN with the baseline models under the different prediction lengths on the ECG dataset, as shown in Figure \ref{fig:ecg}. It reports that FourierGNN achieves the best performances (MAE, RMSE, and MAPE) for all prediction lengths.

\begin{figure*}[ht]
    \vspace{-5mm}
    \centering
    \subfigure[MAE]
    {
        \centering
        \includegraphics[width=0.3\linewidth]{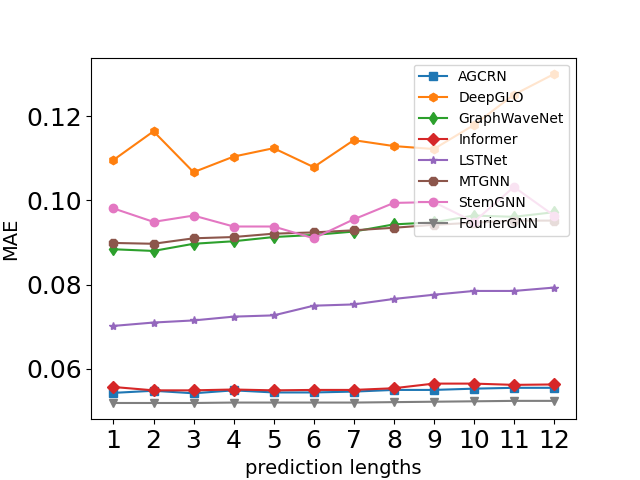}
    }
    \subfigure[RMSE]
    {
        \centering
        \includegraphics[width=0.3\linewidth]{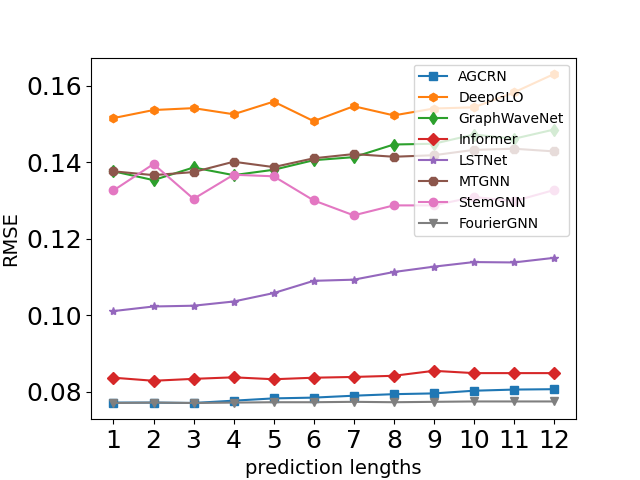}
    }
    \subfigure[MAPE]
    {
        \centering
        \includegraphics[width=0.3\linewidth]{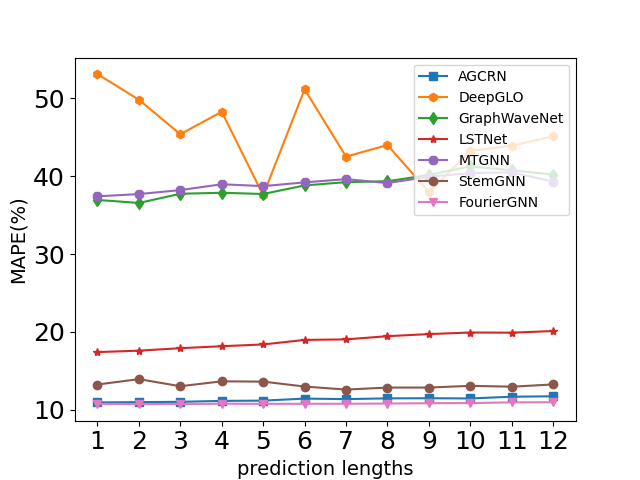}
    }

    \caption{Performance comparison in multi-step prediction on the ECG dataset.}
    \label{fig:ecg}
\end{figure*}

\section{Further Analyses} \label{more_analysis}
\subsection{Scalability Analysis}
\label{more_oarameter_analysis}
We further conduct experiments on the Wiki dataset to investigate the performance of FourierGNN under different graph sizes ($N\times T$). The results are shown in Figure \ref{fig:scale-invariant}, where Figure \ref{scale_b}, Figure \ref{scale_c} and Figure \ref{scale_d} show MAE, RMSE, and MAPE at the different number of nodes, respectively. From these figures, we observe that FourierGNN keeps a leading edge over the other state-of-the-art MTS models as the number of nodes increases. The results demonstrate the superiority and scalability of FourierGNN on large-scale datasets.
%More parameter sensitive analysis can be found in Appendix \ref{more_oarameter_analysis}.
\begin{figure}[ht]
    \centering
    \subfigure[MAE]
    {
        \centering
        \includegraphics[width=0.3\linewidth]{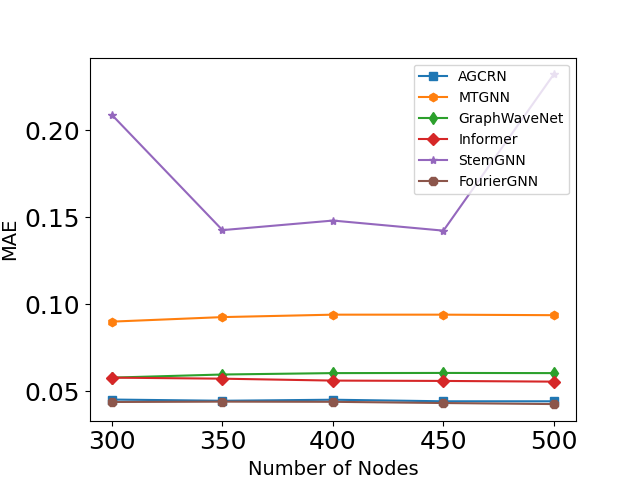}
        \label{scale_b}
    }
    \subfigure[RMSE]
    {
        \centering
        \includegraphics[width=0.3\linewidth]{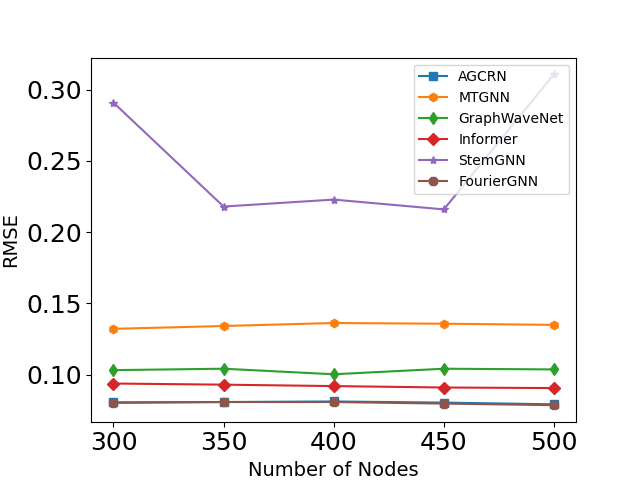}
        \label{scale_c}
    }
    \subfigure[MAPE]
    {
        \centering
        \includegraphics[width=0.3\linewidth]{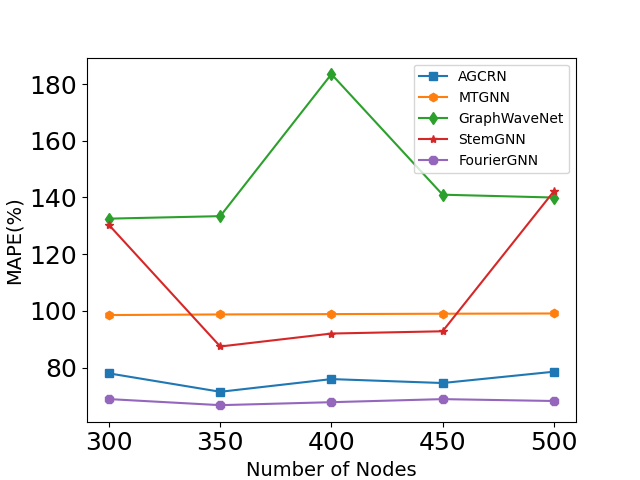}
        \label{scale_d}
    }
    \caption{Scalability analyses in terms of MAE, RMSE, and MAPE under different number of nodes on the Wiki dataset.}
    \label{fig:scale-invariant}
\end{figure}

% \begin{wrapfigure}{l}{7cm}
% \vspace{-7mm}
% \centering
%     \includegraphics[width=6cm]{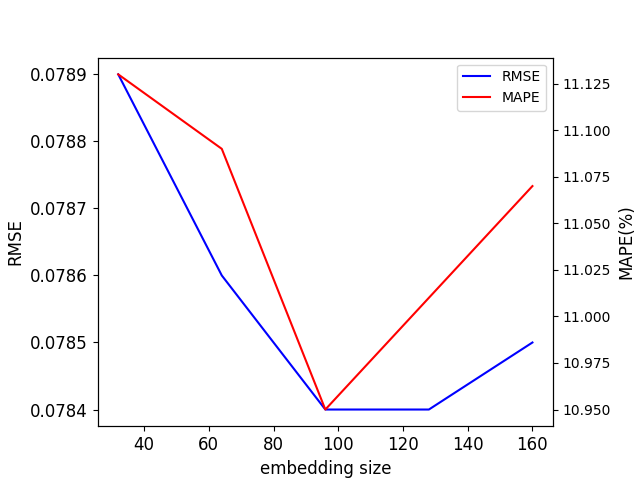}
%     \caption{Influence of embedding size on the ECG datasets.}
%     \label{parameter_embedding}
%     \vspace{-5mm}
% \end{wrapfigure}
% \begin{figure}
%     \centering
%     \includegraphics[width=6cm]{figures/parameter_embedding.png}
%     \caption{Influence of embedding size on the ECG dataset.}
%     \label{parameter_embedding}
% \end{figure}

\subsection{Parameter Analysis} \label{appendix_parameter_analysis}
\begin{wraptable}{r}{7.5cm}\small
    \centering
    \caption{Performance at different diffusion steps (layers) on the COVID-19 dataset.}
    \begin{tabular}{c | c c c c}
    \toprule
     & K=1  & K=2 & K=3 & K=4\\
    \midrule
       MAE & 0.136 &  0.133 & \underline{0.123} & 0.132 \\
       RMSE  & 0.181 & 0.177 & \underline{0.168} & 0.176\\
       MAPE(\%) & 72.30 & 71.80  & \underline{71.52} & 72.59\\
    \bottomrule
    \end{tabular}
    \label{diffusion-step}
\end{wraptable}
We evaluate the forecasting performance of our model FourierGNN under different diffusion steps (layers) on the COVID-19 dataset, as illustrated in Table \ref{diffusion-step}. The table shows that FourierGNN achieves increasingly better performance from $K=1$ to $K=4$ and achieves the best results when $K=3$. With the further increase of $K$, FourierGNN obtains inferior performance. The results indicate that high-order diffusion information is beneficial for improving forecasting accuracy, but the diffusion information may gradually weaken the effect or even bring noises to forecasting with the increase of the order.

\begin{figure}[ht]
\centering
\begin{minipage}[t]{0.4\textwidth}
\centering
\includegraphics[width=0.95\textwidth]{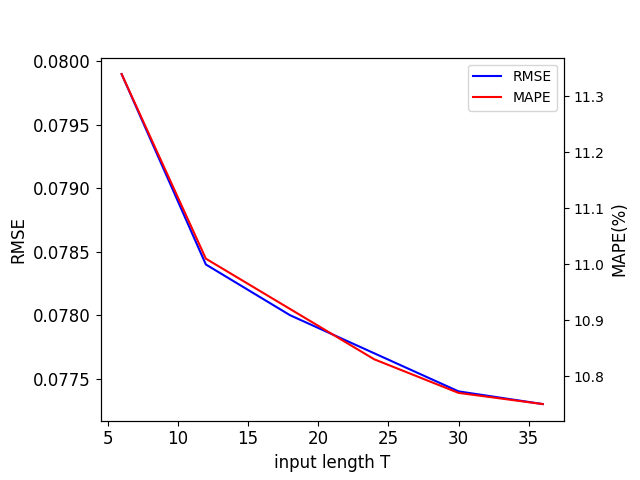}
\caption{Influence of input window.}
\label{parameter_input}
\end{minipage}
\quad
\begin{minipage}[t]{0.4\textwidth}
\centering
\includegraphics[width=0.95\textwidth]{figures/parameter_embedding.png}
\caption{Influence of embedding size.}
\label{parameter_output}
\end{minipage}
\end{figure}

In addition, we conduct additional experiments on the ECG dataset to analyze the effect of the input lookback window length $T$ and the embedding dimension $d$, as shown in Figure \ref{parameter_input} and Figure \ref{parameter_output}, respectively. Figure \ref{parameter_input} shows that the performance (including RMSE and MAPE) of FourierGNN gets better as the input lookback window length increases, indicating that FourierGNN can learn a comprehensive hypervariate graph from long MTS inputs to capture the spatial and temporal dependencies. Moreover, Figure \ref{parameter_output} shows that the performance (RMSE and MAPE) first increases and then decreases with the increase of the embedding size, which is attributed that a large embedding size improves the fitting ability of FourierGNN but it may easily lead to the overfitting issue especially when the embedding size is too large.
%under different frequency bands and different diffusion steps.

\subsection{Ablation Study}\label{appendix_ablation_study}
We provide more details about each variant used in this section and Section \ref{sec:analysis}.
\begin{itemize}
    \item \textbf{w/o Embedding}. A variant of FourierGNN feeds the raw MTS input instead of its embeddings into the graph convolution in Fourier space.
    \item \textbf{w/o Dynamic FGO}. A variant of FourierGNN uses the same FGO for all diffusion steps instead of applying different FGOs in different diffusion steps. It corresponds to a vanilla graph filter.
    \item \textbf{w/o Residual}. A variant of FourierGNN does not have the $K=0$ layer output, i.e., $\mathcal{X}_t^{\mathcal{G}}$, in the summation.
    \item \textbf{w/o Summation}. A variant of FourierGNN adopts the last order (layer) output as the final frequency output of the FourierGNN.
\end{itemize}

We conduct another ablation study on the COVID-19 dataset to further investigate the effects of the different components of our FourierGNN. The results are shown in Table \ref{tab:ablation_covid}, which confirms the results in Table \ref{tab:metr_ablation} and further verifies the effectiveness of each component in FourierGNN. Both Table \ref{tab:ablation_covid} and Table \ref{tab:metr_ablation} report that the embedding and dynamic FGOs in FourierGNN contribute more than the design of residual and summation to the state-of-the-art performance of FourierGNN.
\begin{table}[!h]
    \centering
    \caption{Ablation studies on the COVID-19 dataset.}
    \scalebox{0.9}{
    \begin{tabular}{c | c c c c c}
    \toprule[1pt]
    Metric & w/o Embedding & w/o Dynamic FGO & w/o Residual & w/o Summation & FourierGNN\\ 
    \midrule
     MAE   &0.157&0.138&0.131&0.134&\underline{0.123}\\
     RMSE    &0.203&0.180&0.174&0.177&\underline{0.168}\\
     MAPE(\%) &76.91&74.01&72.25&72.57&\underline{71.52}\\
     \bottomrule[1pt]
    \end{tabular}
    }
    \label{tab:ablation_covid}
\end{table}

\section{Visualizations} \label{visualization_filter}

\subsection{Visualization of the Diffusion Process in FourierGNN}\label{FGO_visualization}
To gain insight into the operation of the FGO, we visualize the frequency output of each layer in our FourierGNN. We select 10 counties from the COVID-19 dataset and visualize their adjacency matrices at two different timestamps, as shown in Figure \ref{fig:filters}. From left to right, the results correspond to the original spectrum of the input, as well as the outputs of the first, second, and third layers of the FourierGNN. From the top, we can find that as the number of layers increases, some correlation values are reduced, indicating that some correlations are filtered out. In contrast, the bottom case illustrates some correlations are enhanced as the number of layers increases. These results show that FGO can adaptively and effectively capture important patterns while removing noises, enabling the learning of a discriminative model. 
%Additional visualizations can be found in Appendix \ref{visualization_filter}.

\begin{figure}[ht]
    \centering
    \includegraphics[width=1\linewidth]{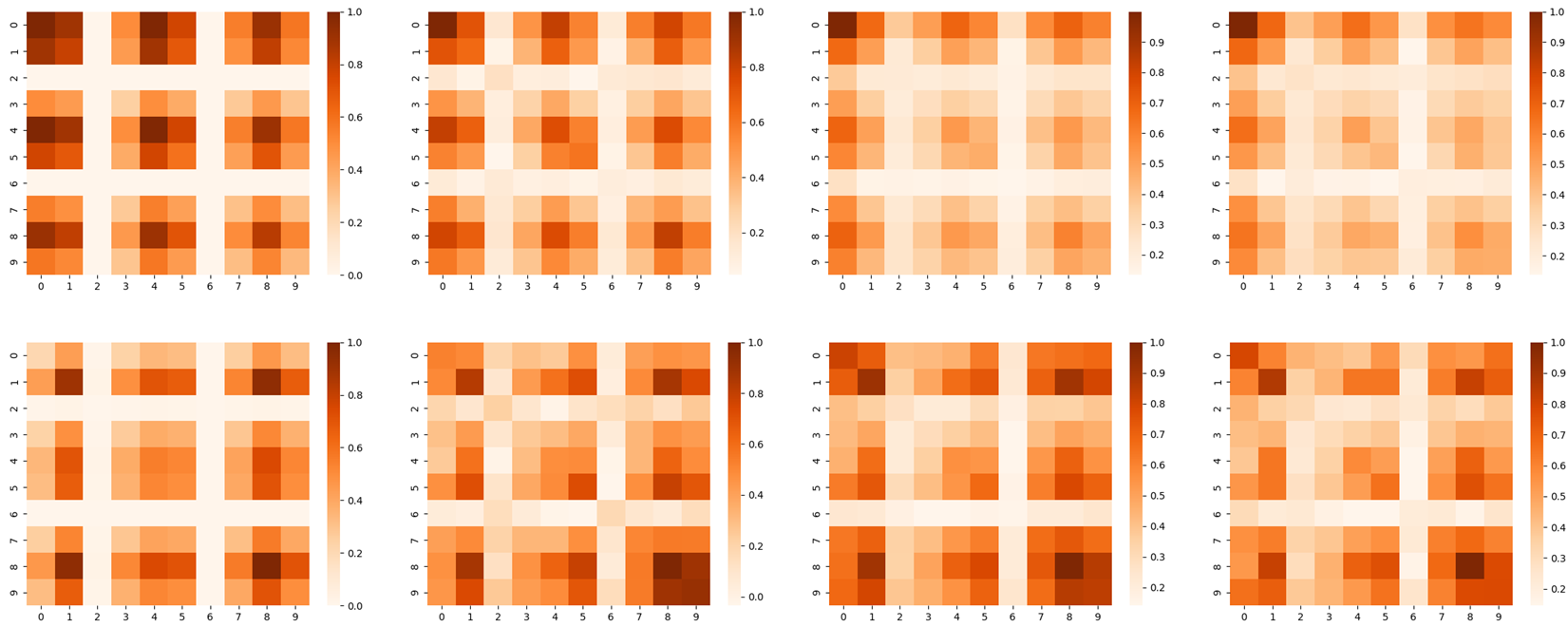}
    \caption{The diffusion process of FourierGNN at two timestamps (top and bottom) on COVID-19.}
    \label{fig:filters}
\end{figure}

\subsection{Visualization of Time-Varying Dependencies Learned by FourierGNN}\label{time_varying_visualization}
Furthermore, we explore the capability of FourierGNN in capturing time-varying dependencies among variables. To investigate this, we perform additional experiments to visualize the adjacency matrix of $10$ randomly-selected counties over four consecutive days on the COVID-19 dataset. The visualization results, displayed as a heatmap in Figure \ref{fig:space_adj}, reveal clear spatial patterns that exhibit continuous evolution in the temporal dimension. This is because FourierGNN can attend to the time-varying variability of the spatiotemporal dependencies. These results verify that our model enjoys the feasibility of exploiting the time-varying dependencies among variables. 

Based on the insights gained from these visualization results, we can conclude that the hypervariate graph structure exhibits strong capabilities to encode spatiotemporal dependencies. By incorporating FGOs, FourierGNN can effectively attend to and exploit the time-varying dependencies among variates. The synergy between the hypervariate graph structure and FGOs empowers FourierGNN to capture and model intricate spatiotemporal relationships with remarkable effectiveness.

\begin{figure}[ht]
    \centering    \includegraphics[width=1\linewidth]{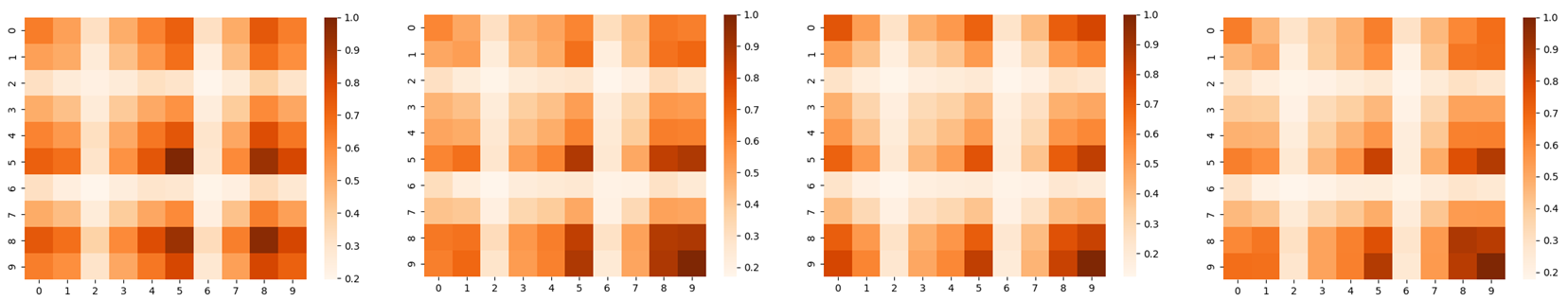}
    \caption{The adjacency matrix for four consecutive days on the COVID-19 dataset.}
    \label{fig:space_adj}
\end{figure}

\end{document}